\newcommand{\FTn}[1]{\textcolor{black}{#1}}
\newcommand{\BIn}[1]{\textcolor{black}{#1}} 
\newcommand{\BI}[1]{\textcolor{black}{#1}} 
\newcommand{\FT}[1]{\textcolor{black}{#1}} 
\newcommand{\AR}[1]{\textcolor{black}{#1}}
\newcommand{\AX}[1]{\textcolor{black}{#1}}
\newtheorem{proposition}{Proposition}
\def\resp{resp.}
\def\Definition{Def.}
\def\Question{\ensuremath{\mathcal{Q}}}
\def\SF{\ensuremath{\mathcal{\sigma}}}
\def\Forecast{\ensuremath{\mathcal{F}}}
\def\Update{\ensuremath{\mathcal{U}}}
\def\Time{\ensuremath{\mathcal{T}}}
\def\Proposal{\ensuremath{\mathcal{P}}}
\def\Agents{\ensuremath{\mathcal{A}}}
\def\Votes{\ensuremath{\mathcal{V}}}
\def\Confidence{\ensuremath{\mathcal{C}}}
\def\Brier{\ensuremath{\mathcal{B}}}
\def\Number{\ensuremath{\mathcal{N}}}
\def\Outcome{\ensuremath{\mathcal{O}}}
\def\NewForecast{\ensuremath{\Forecast^\Proposal}}
\def\GroupForecast{\ensuremath{\Forecast^g}}
\def\AgentForecast{\ensuremath{\Forecast^\Agents}}
\def\AmmArgs{\ensuremath{\mathcal{X}}}
\def\IncArgs{\ensuremath{\AmmArgs^\uparrow}}
\def\DecArgs{\ensuremath{\AmmArgs^\downarrow}}
\def\AttArgs{\ensuremath{\AmmArgs^-}}
\def\SuppArgs{\ensuremath{\AmmArgs^+}}
\def\BS{\ensuremath{\mathcal{\tau}}}
\def\Rels{\ensuremath{\mathcal{R}}}
\def\RelsP{\ensuremath{\Rels^p}}
\def\RelsQ{\ensuremath{\Rels}}
\def\update{\ensuremath{u}}
\def\agent{\ensuremath{a}}
\def\arg{\ensuremath{x}}
\def\Brier{\ensuremath{b}}
\def\decarg{\ensuremath{\arg^\downarrow}}
\def\decarga{\ensuremath{\decarg_1}}
\def\decargb{\ensuremath{\decarg_2}}
\def\incarg{\ensuremath{\arg^\uparrow}}
\def\incarga{\ensuremath{\incarg_1}}
\def\incargb{\ensuremath{\incarg_2}}
\def\incargc{\ensuremath{\incarg_3}}
\def\supparg{\ensuremath{\arg^+}}
\def\supparga{\ensuremath{\supparg_1}}
\def\suppargb{\ensuremath{\supparg_2}}
\def\attarg{\ensuremath{\arg^-}}
\def\attarga{\ensuremath{\attarg_1}}
\def\attargb{\ensuremath{\attarg_2}}
\def\attargc{\ensuremath{\attarg_3}}
\newtheorem{example}{Example}
\newtheorem{definition}{Definition}
\title{Forecasting Argumentation Frameworks}
\author{%
Benjamin Irwin\and
Antonio Rago\And
Francesca Toni\\
\affiliations
Imperial College London\\
\emails
\{benjamin.irwin19, a.rago, ft\}@imperial.ac.uk
}
\begin{document}

\maketitle

\begin{abstract}
We introduce \emph{Forecasting Argumentation Frameworks (FAFs)}, a novel  argumentation-based methodology for forecasting informed by recent judgmental forecasting research. FAFs comprise 
update frameworks which empower (human or artificial) agents to argue over time 
about the probability of outcomes, e.g. the winner of a political election or a fluctuation in inflation rates, whilst flagging perceived \emph{irrationality} in the  agents' behaviour with a view to improving their forecasting accuracy. FAFs include five 
argument types,
amounting to standard  \emph{pro/con} arguments, as in bipolar argumentation, as well as novel  \emph{proposal} arguments and \emph{increase}/\emph{decrease amendment} arguments.
We adapt an existing gradual semantics for bipolar argumentation to determine the aggregated dialectical strength of proposal arguments and
define irrational behaviour
. We then give a simple aggregation function which produces a final group forecast from rational agents'  individual forecasts. We identify and study properties of FAFs and conduct an empirical evaluation which signals FAFs' potential to increase the forecasting accuracy of participants.
\end{abstract}

\section{Introduction}\label{sec:intro}


Historically, humans have performed inconsistently in judgemental forecasting \cite{Makridakis2010,TetlockExp2017}, which incorporates subjective opinion and probability estimates to predictions \cite{Lawrence2006}. Yet, human judgement remains essential in cases where pure statistical methods are not applicable, e.g. where historical data alone is insufficient or for one-off, more `unknowable' events \cite{Petropoulos2016,Arvan2019,deBaets2020}. Judgemental forecasting is widely relied upon for decision-making \cite{Nikolopoulos2021}, in myriad fields from epidemiology to national security \cite{Nikolopoulos2015,Litsiou2019}. Effective tools to help humans improve their predictive capabilities thus have enormous potential for impact. Two recent global events -- the COVID-19 pandemic and the US withdrawal from Afghanistan -- underscore this by highlighting the human and financial cost of predictive deficiency. A multi-purpose system which could improve our ability to predict the incidence and impact of events 
by as little as 5\%, could save millions of lives and be worth trillions of dollars per year \cite{TetlockGard2016}.

Research on judgemental forecasting (see \cite{Lawrence2006,Zellner2021} for overviews), including the recent\AX{,} 
groundbreaking `Superforecasting Experiment' \cite{TetlockGard2016}, is instructive in establishing the desired properties for systems for supporting forecasting
.
In addition to reaffirming the importance of fine-grained probabilistic reasoning \cite{Mellers2015}, this literature points to the benefits of some group techniques versus solo forecasting \cite{Landeta2011,Tetlock2014art}, of synthesising qualitative and quantitative information \cite{Lawrence2006}, of combating agents' irrationality \cite{Chang2016} and of high agent engagement with the forecasting challenge, e.g. robust debating \cite{Landeta2011} and frequent prediction updates \cite{Mellers2015}.

Meanwhile, \emph{computational argumentation}  (see \cite{AImagazine17,handbook} for recent overviews) is a field of AI which involves reasoning with uncertainty and resolving conflicting information, e.g. in natural language debates. As such, it is an ideal candidate for aggregating the broad, polymorphous set of information involved in judgemental group forecasting. An extensive and growing literature is based on various argumentation frameworks -- rule-based systems for aggregating, representing and evaluating sets of arguments, such as those applied in the contexts of \emph{scheduling} \cite{Cyras_19}, \emph{fact checking} \cite{Kotonya_20} or in various instances of \emph{explainable AI} \cite{Cyras_21}. 
Subsets of the requirements for forecasting systems are addressed by individual formalisms, e.g. \emph{probabilistic argumentation}
\AX{\cite{Dung2010,Thimm2012,Hunter2013,Fazzinga2018}} 
may effectively represent and analyse uncertain arguments about the future. However, we posit that a purpose-built argumentation framework for forecasting is essential to effectively utilise 
argumentation's reasoning capabilities in this context.

\begin{figure*}
  \includegraphics[width=\textwidth]{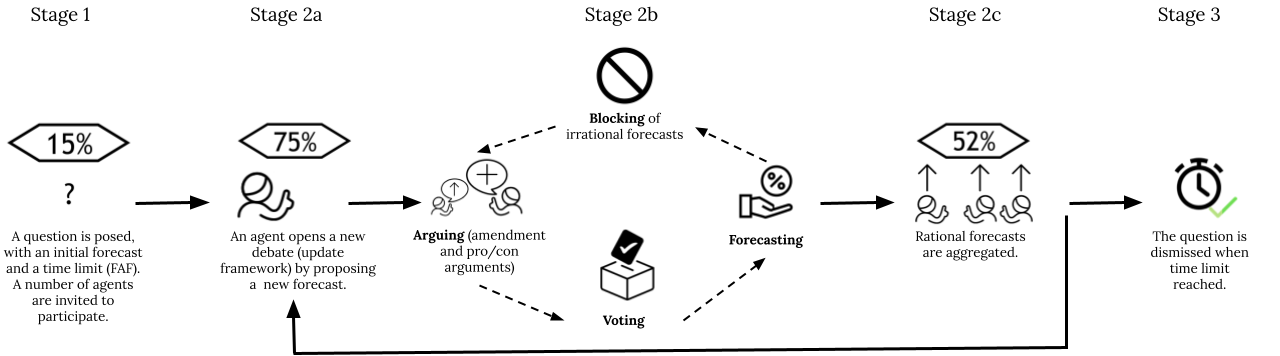}
  \caption{The step-by-step process of a FAF over its lifetime.}
  \label{fig:FAFdiag}
\end{figure*}

In this paper, we attempt to cross-fertilise these two as of yet unconnected academic areas. We draw from forecasting literature to inform the design of a new computational argumentation approach: \emph{Forecasting Argumentation Frameworks} (FAFs). FAFs 
empower (human and artificial) agents to structure  debates in real time and to deliver argumentation-based forecasting.
They offer an approach in the spirit of \emph{deliberative democracy} \cite{Bessette1980} to respond to a forecasting problem over time. The steps which underpin FAFs are depicted in Figure \ref{fig:FAFdiag} (referenced throughout) and can be described in simple terms \FT{as follows}: a FAF is initialised with a time limit \FT{(for the overall forecasting process and for each iteration therein)} and a pre-agreed `base-rate' forecast $\Forecast$ (Stage 1), e.g. based on historical data. 
\FT{Then,} the forecast is revised by one or more (non-concurrent) debates, \BI{in the form of `update frameworks' (Stage 2)}, opened and resolved by participating agents \FT{(}until 
\FT{the} specified time limit is reached\FT{)}. Each update framework begins with a proposed revision to the current forecast (Stage 2a), and proceeds with a cycle of argumentation (Stage 2b) about the proposed forecast, voting on said argumentation and forecasting. Forecasts deemed `irrational' with a view to agents' argumentation and voting are blocked. Finally, the rational forecasts are aggregated and the result replaces the current group forecast (Stage 2c). This process may be repeated over time \BI{in an indefinite number of update frameworks} (thus continually \BI{revising} the group forecast) until the 
\FT{(overall)} time limit is reached. 
The composite nature of this process enables the appraisal of new 
information relevant to the forecasting question as and when it arrives. Rather than confronting an unbounded forecasting question with a diffuse set of possible debates open at once, all agents concentrate their argumentation on a single topic (a proposal) at any given time.


After giving the necessary background on forecasting and argumentation (§\ref{sec:background}), we formalise our \FT{update} framework\FT{s for Step 2a} 
(§\ref{sec:fw}).
We then give 
\FT{our} notion of rationality \FT{(Step 2b)}, along with \FT{our} new method for 
\FT{aggregating rational forecasts (Step 2c)} from a group of agents (§\ref{sec:forecasting}) \FT{and FAFs overall}. We explore the underlying properties of 
\FT{FAFs} (§\ref{sec:props}), before describing 
\FT{\AX{an }
experiment} 
with 
\FT{a prototype implementing} our approach (§\ref{sec:experiments}). Finally, we conclude and suggest potentially fruitful avenues for future work (§\ref{sec:conclusions}).

\section{Background}\label{sec:background}

\subsection{Forecasting}

Studies on the efficacy of judgemental forecasting have shown mixed results \cite{Makridakis2010,TetlockExp2017,Goodwin2019}. Limitations of the judgemental approach are a result of well-documented cognitive biases \cite{Kahneman2012}, irrationalities in human probabilistic reasoning which lead to distortion of forecasts. Manifold methodologies have been explored to improve judgemental forecasting accuracy to varying success \cite{Lawrence2006}. These methodologies include, but are not limited to, prediction intervals \cite{Lawrence1989}, decomposition \cite{MacGregorDonaldG1994Jdwd}, structured analogies \cite{Green2007,Nikolopoulos2015} and unaided judgement \cite{Litsiou2019}. Various group forecasting techniques have also been explored \cite{Linstone1975,Delbecq1986,Landeta2011}, although the risks of groupthink \cite{McNees1987} and the importance of maintaining the independence of each group member's individual forecast are well established \cite{Armstrong2001}. Recent advances in the field have been led by Tetlock and Mellers' superforecasting experiment \cite{TetlockGard2016}, which leveraged \AX{geopolitical} forecasting tournaments and a base of 5000 volunteer forecasters to identify individuals with consistently exceptional accuracy (top 2\%). The experiment\AR{'s} findings underline the effectiveness of group forecasting orientated around debating \cite{Tetlock2014art}, and demonstrate a specific cognitive-intellectual approach conducive for forecasting \cite{Mellers20151,Mellers2015}, but stop short of suggesting a concrete universal methodology for higher accuracy. Instead, Tetlock draws on his own work and previous literature to crystallise a broad set of methodological principles by which superforecasters abide \cite[pg.144]{TetlockGard2016}:

\begin{itemize}
    \item \emph{Pragmatic}: not wedded to any idea or agenda;
    \item \emph{Analytical:} capable of stepping back from the tip-of-your-nose perspective 
    and considering other views;
    \item \emph{Dragonfly-eyed:} value diverse views and synthesise them into their own;
    \item \emph{Probabilistic:} judge using many grades of maybe;
    \item \emph{Thoughtful updaters:} when facts change, they change their minds;
    \item \emph{Good intuitive psychologists:} aware of the value of checking thinking for cognitive and emotional biases.
\end{itemize}

Subsequent research after the superforecasting experiment has included exploring further optimal forecasting tournament preparation \cite{penn_global_2021,Katsagounos2021} and extending Tetlock and Mellers' approach to answer broader, more time-distant questions \cite{georgetown}. It should be noted that there have been no recent advances on computational tool\AX{kits} 
for the field similar to that proposed in this paper.

\subsection{Computational Argumentation}

We posit that existing argumentation formalisms are not well suited for the aforementioned future-based arguments, which are necessarily semantically and structurally different from arguments about present or past concerns. Specifically, forecasting arguments are inherently probabilistic and must deal with the passage of time and its implications for the outcomes at hand.
Further, several other important characteristics can be drawn from the forecasting literature which render current argumentation formalisms unsuitable, e.g. the paramountcy of dealing with bias (in data and cognitive), forming granular conclusions, fostering group debate and the co-occurrence of qualitative and quantitative arguing.

Nonetheless,
several of these characteristics have been previously explored in argumentation and our formalisation draws from several existing approaches
. First and foremost, it draws in spirit from abstract argumentation frameworks (AAFs) \cite{Dung1995}, in that 
the arguments' inner contents are ignored and the focus is on the relationships between arguments. 
However, we consider arguments of different types and \AX{an additional relation of} support (pro), 
\AX{rather than} attack (con) alone as in \cite{Dung1995}. 
%
 Past work has also introduced probabilistic constraints into argumentation frameworks. {Probabilistic AAFs} (prAAFs) propose two divergent ways for modelling uncertainty in abstract argumentation using 
 probabilities 
 - the constellation approach \cite{Dung2010,Li2012} and the epistemic approach \cite{Hunter2013,Hunter2014,Hunter2020}. 
These formalisations use probability as a means to assess uncertainty over the validity of arguments (epistemic) or graph topology (constellation), but do not enable reasoning \emph{with} or \emph{about} probability, which is fundamental in forecasting. 
In exploring temporality, \cite{Cobo2010} augment AAFs by providing each argument with a limited lifetime. Temporal constraints have been extended in \cite{Cobo2012} and \cite{Baron2014}. Elsewhere, \cite{Rago2017} have used argumentation to model irrationality or bias in 
agents. Finally, a wide range of gradual evaluation methods have gone beyond traditional qualitative semantics by measuring arguments' acceptability on a scale (normally 
[0,1]) \cite{Leite2011,Evripidou2012,Amgoud2017,Amgoud2018,Amgoud2016}. 
Many of these approaches have been unified as Quantitative Bipolar Argumentation Frameworks (QBAFs) in \cite{Baroni2018}.

Amongst existing approaches, of special relevance in this paper are Quantitative Argumentation Debate (QuAD) frameworks \cite{Baroni2015}, 
i.e. 5-tuples ⟨$\mathcal{X}^a$, $\mathcal{X}^c$, $\mathcal{X}^p$, $\mathcal{R}$, $\BS$⟩ 
where
$\mathcal{X}^a$ is a finite set of \emph{answer} arguments (to implicit \emph{issues}); $\mathcal{X}^c$ is a finite set of \emph{con} arguments; 
$\mathcal{X}^p$ is a finite set of \emph{pro} arguments; 
$\mathcal{X}^a$, $\mathcal{X}^c$ and $\mathcal{X}^p$ are pairwise disjoint; $\mathcal{R} \subseteq (\mathcal{X}^c \cup \mathcal{X}^p) \times (\mathcal{X}^a \cup \mathcal{X}^c \cup \mathcal{X}^p)$ is an acyclic binary relation; 
$\BS$ : $(\mathcal{X}^a \cup \mathcal{X}^c \cup \mathcal{X}^p) \rightarrow 
[0,1]$ is a total function: $\BS(a)$ is the \emph{base score} of $a$. 
Here, attackers and supporters of arguments are determined by the pro and con arguments they are in relation with. Formally, for any $a\in\mathcal{X}^a \cup \mathcal{X}^c \cup \mathcal{X}^p$, the set of \emph{con} (\emph{pro}\AX{)}  \emph{arguments} of $a$ is $\mathcal{R}^-(a) = \{b\in\mathcal{X}^c|(b,a)\in\mathcal{R}\}$ 
($\mathcal{R}^+(a) = \{b\in\mathcal{X}^p|(b,a)\in\mathcal{R}\}$, \resp).
Arguments in QuAD frameworks are scored by
the \emph{Discontinuity-Free QuAD} (DF-QuAD) algorithm \cite{Rago2016},  using the argument's intrinsic base score and the \emph{strengths} of its pro/con arguments. \FTn{Given that DF-QuAD is used to define our method (see \Definition~\ref{def:conscore}), for completeness we define it formally here.} DF-QuAD's \emph{strength aggregation function} 
is defined as $\Sigma : 
[0,1]^* \rightarrow 
[0,1]$, where for $\mathcal{S} = (v_1,\ldots,v_n) \in 
[0,1]^*$:
if $n=0$, $\Sigma(S) = 0$; 
if $n=1$, $\Sigma(S) = v_1$; 
if $n=2$, $\Sigma(S) = f(v_1, v_2)$; 
if $n>2$, $\Sigma(S) = f(\Sigma(v_1,\ldots,v_{n-1}), v_n)$; 
with the \emph{base function} $f:
[0,1]\times [0,1] \rightarrow [0,1]$ defined, for $v_1, v_2\in
[0,1]$, as:
$f(v_1,v_2)=v_1+(1-v_1)\cdot v_2 = v_1 + v_2 - v_1\cdot v_2$.
After separate aggregation of the argument's pro/con descendants, the combination function $c : 
[0,1]\times
[0,1]\times
[0,1]\rightarrow
[0,1]$ combines $v^-$ and $v^+$ with the argument's base score ($v^0$):
$c(v^0,v^-,v^+)=v^0-v^0\cdot\mid v^+ - v^-\mid\:if\:v^-\geq v^+$ and
$c(v^0,v^-,v^+)=v^0+(1-v^0)\cdot\mid v^+ - v^-\mid\:if\:v^-< v^+$, \resp\
The inputs for the combination function are provided by the \emph{score function}, $\SF : \mathcal{X}^a\cup\mathcal{X}^c\cup\mathcal{X}^p\rightarrow
[0,1]$, which gives the argument's strength, as follows: for any $\arg \in \mathcal{X}^a\cup\mathcal{X}^c\cup\mathcal{X}^p$:
$\SF(\arg) = c(\BS(\arg),\Sigma(\SF(\mathcal{R}^-(\arg))),\Sigma(\SF(\mathcal{R}^+(\arg))))$
where if $(\arg_1,\ldots,\arg_n)$ is an arbitrary permutation of the ($n \geq 0$) con arguments in $\mathcal{R}^-(\arg)$, $\SF(\mathcal{R}^-(\arg))=(\SF(\arg_1),\ldots,\SF(\arg_n))$ (similarly for pro arguments).
Note that the DF-QuAD notion of $\SF$ can be applied to any argumentation framework where arguments are equipped with base scores and pro/con arguments. We will do so later, for our novel formalism.

\section{
Update \AX{F}rameworks}\label{sec:fw}
We begin by defining the individual components of our frameworks, starting with the fundamental notion of a 
\emph{forecast}. 
\FT{This} is a probability estimate for the positive outcome of a given (binary) question. 

\begin{definition}
A \emph{forecast} $\Forecast$ is the probability $P(\Question=true) \in [0,1]$ for a given \emph{forecasting question} $\Question$. 
\end{definition}


\begin{example}
\label{FAFEx}
Consider the forecasting question $\Question$: \emph{`Will the Tokyo \AX{2020 Summer} Olympics be cancelled/postponed to another year?'}.
\AX{Here, the $true$ outcome amounts to the Olympics being cancelled/postponed (and $false$ to it taking place in 2020 as planned).}
Then, a forecast $\Forecast$ may be $P(\Question=true)= 0.15$\, which amounts to a 15\% probability of the Olympics \BIn{being cancelled/postponed}. \BI{Note that $\Forecast$ may have been introduced as part of an update framework (herein described), or as an initial base rate at the outset of a FAF (Stage 1 in Figure \ref{fig:FAFdiag}).}

\end{example}

In the remainder, we will often drop $\Question$, implicitly assuming it is given, and use $P(true)$ to stand for $P(\Question=true)$.


In order to empower agents to reason about probabilities and thus support forecasting, we need, in addition to 
\emph{pro/con} arguments as in QuAD frameworks, two new argument types:

\begin{itemize}
    \item 
\emph{proposal} arguments,
each about some forecast (and its underlying forecasting question); each proposal argument $\Proposal$ has a \emph{forecast} 
and, optionally, some supporting \emph{evidence}
; and 
\item \emph{amendment} arguments
, which 
\AX{suggest a modification to}
some forecast\AX{'s probability} by increasing or decreasing it, and are accordingly separated into 
disjoint classes of \emph{increase} and \emph{decrease} (amendment) arguments.\footnote{Note that 
we decline to include a third type of amendment argument 
for arguing that $\NewForecast$ is just right. This choice rests on the assumption that additional information always necessitates a change to $\NewForecast$, however granular that change may be. This does not restrict individual agents arguing about $\NewForecast$ from casting $\NewForecast$ as their own final forecast. However, rather than cohering their argumentation around $\NewForecast$, which we hypothesise would lead to high risk of groupthink~\cite{McNees1987}, agents are compelled to consider the impact of their amendment arguments on this more granular level.}
\end{itemize}

Note that amendment arguments are introduced specifically for arguing about a proposal argument, given that traditional QuAD pro/con 
arguments are of limited use when the goal is to judge the acceptability of a probability, and that in forecasting agents must not only decide \emph{if} they agree/disagree but also \emph{how} they agree/disagree (i.e. whether they believe 
the forecast is too low or too high considering, if available, 
the evidence). Amendment arguments, with their increase and decrease classes, provide for this.

\begin{example}
\label{ProposalExample}
A proposal argument $\Proposal$ in the Tokyo Olympics setting may comprise 
forecast: \emph{\AX{`}There is a 75\% chance that the Olympics will be cancelled/postponed to another year'}. It may also include 
evidence: \emph{`A new poll today shows that 80\% of the Japanese public want the Olympics to be cancelled. The Japanese government is likely to buckle under this pressure.'}
This argument may aim to prompt updating the earlier forecast in Example~\ref{FAFEx}.
A \emph{decrease} amendment argument may be  $\decarga$: \emph{`The International Olympic Committee and the Japanese government will ignore the views of the Japanese public'}. An \emph{increase} amendment argument may be $\incarga$: \emph{`Japan's increasingly popular opposition parties will leverage this to make an even stronger case for cancellation'}.
\end{example}

Intuitively, a proposal argument 
is the focal point of the argumentation. It typically suggests a new forecast to replace prior forecasts, argued on the basis of some new evidence (as in the earlier example). We will see that proposal arguments remain immutable through each debate (update framework), which takes place via amendment arguments and standard pro/con arguments.
Note that, wrt QuAD argument types, proposal arguments replace issues and amendment arguments replace answers, in that the former are driving the debates and the latter are the options up for debate. 
Note also that amendment arguments merely state a direction wrt $\NewForecast$ and do not contain any more information, such as \emph{how much} to alter $\NewForecast$ by. 
We will see that alteration can be determined by \emph{scoring} amendment arguments.

Proposal and amendment arguments, alongside pro/con arguments, form part of 
our novel update frameworks \BI{(Stage 2 of Figure \ref{fig:FAFdiag})}, defined as follows:

\begin{definition} An \emph{update framework} is a nonad 
⟨$\Proposal, \AmmArgs, \AttArgs, \SuppArgs, \RelsP, \RelsQ, \Agents, \Votes, \AgentForecast$⟩ such that: 

\item[$\bullet$] $\Proposal$ is a single proposal argument 
with \emph{forecast} $\NewForecast
$ and, optionally, \emph{evidence} $\mathcal{E}^\Proposal$ for this forecast;
\item[$\bullet$] $\AmmArgs = \IncArgs \cup \DecArgs$ is a finite set of \emph{amendment arguments} composed of  subsets $\IncArgs$ of \emph{increase} arguments  and 
$\DecArgs$ of 
\emph{decrease} arguments;
\item[$\bullet$] $\AttArgs$ is a finite set  
of \emph{con} arguments;
\item[$\bullet$] $\SuppArgs$ is a finite set 
of \emph{pro} arguments;
\item[$\bullet$] the sets $\{\Proposal\}$, $\IncArgs$, $\DecArgs$, $\AttArgs$ and $\SuppArgs$ are pairwise disjoint;
\item[$\bullet$] $\RelsP$ $\subseteq$ $\AmmArgs$ $\times$ $\{\Proposal\}$ is a directed acyclic 
binary relation 
between amendment arguments and the proposal argument (we may refer to this relation informally as `probabilistic');
\item[$\bullet$] $\RelsQ$ $\subseteq$ ($\AttArgs$ $\cup$ $\SuppArgs$) $\times$ ($\AmmArgs$ $\cup$ $\AttArgs$ $\cup$ $\SuppArgs$) is a directed acyclic, 
binary relation 
\FTn{from} pro/con arguments 
\FTn{to} amendment\FTn{/pro/con arguments} (we may refer to this relation informally as `argumentative');

\item[$\bullet$] $\Agents = \{ \agent_1, \ldots, \agent_n \}$ is a finite set of \emph{agents} $(n >1$);
\item[$\bullet$] $\Votes$ : $\Agents$ $\times$ ($\AttArgs$ $\cup$ $\SuppArgs$) $\rightarrow$ [0, 1] is a total function such that $\Votes(\agent,\arg)$ is the \emph{vote} of agent $\agent\in\Agents$ on argument $\arg \in \AttArgs \cup \SuppArgs$; with an abuse of notation, we let $\Votes_\agent$ : ($\AttArgs$ $\cup$ $\SuppArgs$) $\rightarrow [0, 1]$ represent the votes of a \emph{single} agent $\agent\in\Agents$, e.g. $\Votes_\agent(\arg) = \Votes(\agent,\arg)$; 
\item[$\bullet$] $\AgentForecast = \{ \AgentForecast_{\agent_1}, \ldots, \AgentForecast_{\agent_n} \}$ is 
such that $\AgentForecast_{\agent_i} 
$, where $i \in \{ 1, \ldots n \}$, is the \emph{forecast} of agent $\agent_i\in\Agents$.
\end{definition}

\BIn{Note that pro \AX{(}con\AX{)} arguments can be seen as supporting (attacking, \resp) other arguments via $\Rels$, as in the case of conventional QuAD frameworks~\cite{Baroni2015}.}

\begin{example}
\label{eg:tokyo}
A possible update framework 
in our running setting may include $\Proposal$ as in Example~\ref{ProposalExample} as well as (see Table \ref{table:tokyo}) $\AmmArgs=\{\decarga, \decargb, \incarga\}$, $\AttArgs=\{\attarga, \attargb, \attargc\}$, $\SuppArgs=\{\supparga, \suppargb\}$, $\RelsP=\{(\decarga, \Proposal)$, $(\decargb, \Proposal), (\incargc, \Proposal)\}$, and $\Rels=\{(\attarga, \decarga), (\attargb, \decarga), (\attargc, \incarga)$, $(\supparga, \decargb),$ $ (\suppargb, \incarga)\}$
. Figure \ref{fig:tokyo} gives a graphical representation of 
these arguments and relations.
\BIn{Assuming $\Agents=\{alice, bob, charlie\}$, $\Votes$ may be such that $\AX{\Votes_{alice}(\attarga)} = 1$, $\AX{\Votes_{bob}(\supparga)} = 0$, and so on.}
\end{example}

\begin{table}[t]
\begin{tabular}{p{0.7cm}p{6.7cm}}
\hline
&
  Content \\ \hline
$\Proposal$ &
  `A new poll today shows that 80\% of the Japanese public want the Olympics to be cancelled owing to COVID-19, and the Japanese government is likely to buckle under this pressure ($\mathcal{E}^\Proposal)$. Thus, there is a 75\% chance that the Olympics will be cancelled/postponed to another year' ($\NewForecast$). \\
$\decarga$ &
  `The International Olympic Committee and the Japanese government will ignore the views of the Japanese public'. \\
$\decargb$ &
  `This poll comes from an unreliable source.' \vspace{2mm}\\
  $\incarga$ &
  `Japan's increasingly popular opposition parties will leverage this to make an even stronger case for cancellation.' \\ 
$\attarga$ &
  `The IOC is bluffing - people are dying, Japan is experiencing a strike. They will not go ahead with the games if there is a risk of mass death.' \\ 
$\attargb$ &
  `The Japanese government may renege on its commitment to the IOC, and use legislative or immigration levers to block the event.' \\
$\attargc$ &
  `Japan's government has sustained a high-approval rating in the last year and is strong enough to ward off opposition attacks.' \\
$\supparga$ &
  `This pollster has a track record of failure on Japanese domestic issues.' \\
$\suppargb$ &
  `Rising anti-government sentiment on Japanese Twitter indicates that voters may be receptive to such arguments.' \\ \hline
\end{tabular}
\caption {
Arguments in the update framework 
in Example~\ref{eg:tokyo}.}
\label{table:tokyo}
\end{table}

\begin{figure}[t]
\centering
\includegraphics[width=0.9\linewidth]{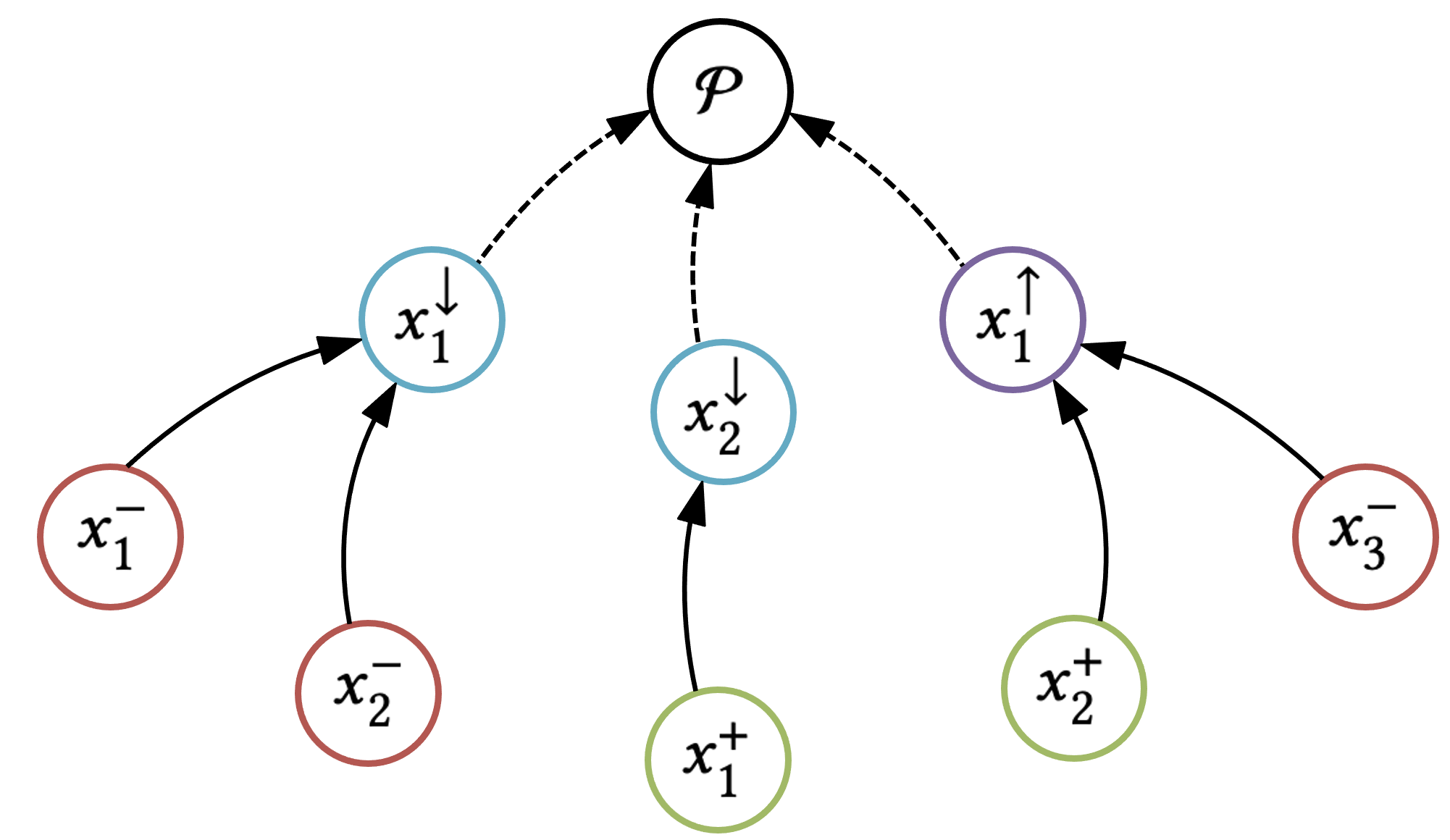}
\centering
\caption {\BIn{A graphical representation of arguments and relations in the update framework 
from Example~\ref{eg:tokyo}. Nodes 
represent proposal ($\Proposal$), increase ($\uparrow$), decrease ($\downarrow$), pro ($+$) and con ($-$)
arguments, while \FTn{dashed/solid} edges indicate
, \resp, the $\RelsP$/$\Rels$ relations. 
} 
}
\label{fig:tokyo}
\end{figure}

Several considerations about update frameworks are in order.
Firstly, they represent `stratified' debates: graphically, they can be represented as trees with 
the proposal argument as root,  amendment arguments as children of the root, and pro/con arguments forming the lower layers, as shown in Figure \ref{fig:tokyo}.
 This tree structure serves to focus argumentation towards the proposal  (i.e. the probability and, if available, evidence) it puts forward.
Second, we have chosen to impose a `structure' on proposal arguments, whereby their forecast is distinct from their (optional) evidence. Here the forecast has special primacy over the evidence, because forecasts are the vital reference point and the drivers of debates in FAFs. They are, accordingly, both mandatory and required to `stand out' to participating agents. In the spirit of abstract argumentation \cite{Dung1995}, we nonetheless treat all arguments, including proposal arguments, as `abstract', and focus on relations between them rather between their components. In practice, therefore, amendment arguments may relate to a proposal argument's forecast but also, if present, to its evidence. We opt for this abstract view on the assumption that the flexibility of this approach better suits judgmental forecasting, which has a diversity of use cases (e.g. including politics, economics and sport) where different argumentative approaches may be deployed (i.e. quantitative, qualitative, directly attacking amendment nodes or raising alternative POVs) and wherein forecasters may lack even a basic knowledge of argumentation.
We leave the study of structured variants of our framework (e.g. see overview in \cite{structArg})  to future work: these may consider finer-grained representations of all arguments in terms of different components, and finer-grained notions of relations between components, rather than full arguments. Third, in update frameworks, voting is restricted to pro/con arguments. Preventing agents from voting directly on amendment arguments mitigates against the risk of arbitrary judgements: agents cannot make off-the-cuff estimations but can only express their beliefs via (pro/con) argumentation, thus
ensuring a more rigorous process of appraisal for the proposal and amendment arguments.   
Note that rather than facilitating voting on arguments using a two-valued perspective (i.e. positive/negative)
or a three-valued perspective (i.e. positive/negative/neutral), $\Votes$ allows agents to cast more granular judgements of (pro/con) argument acceptability, the need for which has been highlighted in the literature \cite{Mellers2015}. 
Finally, although we envisage that arguments of all types are put forward by agents during debates, we do not capture this mapping in update frameworks. Thus, we do not capture who put forward which arguments, but instead only use votes to encode and understand agents' views. This enables more nuanced reasoning and full engagement on the part of agents with alternative viewpoints (i.e. an agent may freely argue both for and against a point before taking an explicit view with their voting). Such conditions are essential in a healthy forecasting debate \cite{Landeta2011,Mellers2015}.

In the remainder of this paper, with an abuse of notation, we often use $\NewForecast$ to denote, specifically,  the probability advocated in $\NewForecast$ (e.g. 0.75 in Example \ref{ProposalExample}).

\section{Aggregating Rational 
Forecasts 
}\label{sec:forecasting}

In this section we 
formally introduce (in \AX{§}\ref{subsec:rationality}) our notion of rationality and discuss how it may be used to identify\BI{, and subsequently `block',} undesirable behaviour in forecasters. We then define (in \AX{§}\ref{subsec:aggregation}) a method for calculating a revised forecast \BI{(Stage 2c of Figure \ref{fig:FAFdiag})}, which aggregates the views of all agents in the update framework, whilst optimising their overall forecasting accuracy.

\subsection{Rationality}\label{subsec:rationality}

Characterising an agent’s view as irrational offers opportunities to refine the accuracy of their forecast (and thus the overall aggregated group forecast). Our definition of rationality is inspired by, but goes beyond, that of QuAD-V \cite{Rago2017}, which was introduced for the e-polling context. Whilst update frameworks eventually produce a single aggregated forecast on the basis of group deliberation, each agent is first evaluated for their rationality on an individual basis. Thus, as in QuAD-V, in order to define rationality for individual agents, we first reduce frameworks to \emph{delegate frameworks} for each agent, which are the restriction of update frameworks 
to a single agent. 

\begin{definition}
A \emph{delegate framework} for an agent $\agent$ is $\update_{\agent} =$ ⟨$\Proposal, \AmmArgs, \AttArgs, \SuppArgs, \RelsP, \RelsQ, \agent, \Votes_{\agent}, \AgentForecast_{\agent}$⟩.
\end{definition} 

Note that all arguments in an update framework are included in each agent's delegate framework, but only the agent's votes and forecast are carried over.

Recognising the irrationality of an agent requires 
comparing the agent's forecast against (an aggregation of) their opinions on the amendment arguments and, by extension, on the proposal argument.
To this end, we evaluate the different parts of the update framework as follows. We use the DF-QuAD algorithm \cite{Rago2016} to score each amendment argument for the agent, in the context of the pro/con arguments `linked' to the amendment argument, using $\Rels$, in the context of the agent's delegate framework. We refer to the DF-QuAD score function as $\SF$.
This requires a choice of base scores for amendment arguments as well as pro/con arguments. 
We assume the same base score $\BS(\arg)=0.5$ for all $\arg \in \AmmArgs$;
in contrast, the base score of pro/con arguments is a result of the votes they received from the agent, in the spirit of QuAD-V \cite{Rago2017}.
The intuition behind assigning a neutral (0.5) base score for amendment arguments is that an agent's estimation of their strength from the outset would be susceptible to bias and inaccuracy.  
Once each amendment argument has been scored
(using $\SF$) for the agent, we aggregate the scores of all amendment arguments (for the same agent) to 
to calculate the agent's \emph{confidence score} in the proposal argument (which underpins our rationality constraints), by weighting the mean average strength of this argument's increase amendment relations against that of the set of decrease amendment relations:

\begin{definition}\label{def:conscore} 
Given a delegate framework $\update_{\agent}$ = ⟨$\Proposal$, $\AmmArgs$, $\AttArgs$, $\SuppArgs$, $\RelsP$, $\RelsQ$, $\agent$, $\Votes_{\agent}$, $\AgentForecast_{\agent}$⟩ 
, let 
$\IncArgs = \{ \incarga,  \incargb, \ldots , \incarg_i \}$ and  $\DecArgs = \{ \decarga,  \decargb, \ldots , \decarg_j \}$. 
Then,
$\agent$'s \emph{confidence score} is as follows:  
\begin{align}
&\text{if } i\neq0, j\neq0: \quad \Confidence_{\agent}(\Proposal) = \frac{1}{i} \sum_{k=1}^{i} \SF(\incarg_k) - \frac{1}{j} \sum_{l=1}^{j} \SF(\decarg_l); \nonumber \\
&\text{if } i\neq0, j=0: \quad \Confidence_{\agent}(\Proposal) = \frac{1}{i} \sum_{k=1}^{i} \SF(\incarg_k); \nonumber \\
&\text{if } i=0, j\neq0: \quad \Confidence_{\agent}(\Proposal) = - \frac{1}{j} \sum_{l=1}^{i} \SF(\decarg_l); \nonumber \\
&\text{if } i=0, j=0: \quad \Confidence_{\agent}(\Proposal) = 0. \nonumber
\end{align}
\end{definition} 

Note that $\Confidence_{\agent}(\Proposal) \in [-1,1]$, which denotes the overall views of the agent on the forecast $\NewForecast$ (i.e. as to whether it should be \emph{increased} or \emph{decreased}, and how far). A negative (positive) $\Confidence_{\agent}(\Proposal)$ indicates that an agent believes that $\NewForecast$ should be amended down (up, \resp). 
The size of $\Confidence_{\agent}(\Proposal)$ reflects the degree of the agent's certainty in either direction. 
In turn, we can constrain an agent's forecast $\AgentForecast_\agent$ if it contradicts this belief 
as follows.

\begin{definition}\label{def:irrationality}
Given a delegate framework $\update_{\agent}$ = ⟨$\Proposal$, $\AmmArgs$, $\AttArgs$, $\SuppArgs$, $\RelsP$, $\RelsQ$, $\agent$, $\Votes_{\agent}$, $\AgentForecast_{\agent}$⟩
, $\agent$’s forecast $\AgentForecast_\agent$ is \emph{strictly rational} (wrt $\update_{\agent}$) iff:
\begin{align}
if\;\Confidence_{\agent}(\Proposal) < 0\;then\; \AgentForecast_\agent < \NewForecast 
\nonumber \\
if\;\Confidence_{\agent}(\Proposal) > 0\;then\; \AgentForecast_\agent > \NewForecast 
\nonumber \\
\centering
\mid\Confidence_{\agent}(\Proposal)\mid \geq \frac{\mid\NewForecast - \AgentForecast_\agent\mid}{\NewForecast} 
\nonumber
\end{align}
\end{definition} 






Hereafter, we refer to forecasts which violate the first two constraints as, \resp, \emph{irrational increase} and \emph{irrational decrease} forecasts, and to forecasts which violate the final constraint as \emph{irrational scale} forecasts.

This definition of rationality preserves the integrity of group forecast in two ways. First, it prevents agents from forecasting against their beliefs: an agent cannot increase $\NewForecast$ if $\Confidence_{\agent}(\Proposal) < 0$ 
and an agent cannot decrease $\NewForecast$ if $\Confidence_{\agent}(\Proposal) > 0$
;
further, it ensures that agents cannot make forecasts disproportionate to their confidence score -- \emph{how far} an agent $\agent$ deviates from the proposed change $\NewForecast$ is restricted by $\Confidence_{\agent}(\Proposal)$; 
finally, an agent must have $\Confidence_{\agent}(\Proposal)$ greater than or equal to the relative change to $\NewForecast$ denoted in their forecast $\AgentForecast_\agent$
. 
Note that the \emph{irrational scale} constraint deals with just one direction of proportionality (i.e. providing only a maximum threshold for $\AgentForecast_\agent$'s deviation from $\NewForecast$, but no minimum threshold). Here, we avoid bidirectional proportionality on the grounds that such a constraint would impose an arbitrary notion of arguments' `impact' on agents. An agent may have a very high $\Confidence_{\agent}(\Proposal)$, indicating 
\FT{their} belief that $\NewForecast$ is too low, but \AX{may}, we suggest, rationally choose to increase $\NewForecast$ by only a small amount (e.g. if, despite 
\FT{their} general agreement with the arguments, 
\FT{they} believe the overall issue at stake in $\Proposal$ to be minor or low impact to the overall forecasting question). Our definition of rationality, which relies on notions of argument strength derived from DF-QuAD, thus informs but does not wholly dictate agents' forecasting, affording them considerable freedom. We leave alternative, stricter definitions of rationality, which may derive from probabilistic conceptions of argument strength, to future work.

\begin{example}
Consider our running Tokyo Olympics example, with the same 
arguments and relations from Example \ref{eg:tokyo} and an agent \BIn{$alice$} with a confidence score \BIn{$\Confidence_{alice}(\Proposal) = -0.5$}. From this we know that \BIn{$alice$} believes that the suggested 
$\NewForecast$ in the proposal argument $\Proposal$ 
should be decreased. 
Then, under our definition of rationality, \BIn{$alice$'s} forecast \BIn{$\AgentForecast_{alice}$} is `rational' if it decreases $\NewForecast$ by up to 50\%. 
\end{example} 

If an agent's forecast $\AgentForecast_\agent$ violates these rationality constraints then \BI{it is `blocked'} and the agent is prompted to return to the argumentation graph. From here, they may carry out one or more of the following actions to render their forecast rational: 

a. Revise their forecast; 

b. Revise their votes on arguments; 

c. Add new arguments 
(and vote on 
them).

Whilst a) and b) occur on an agent-by-agent basis, confined to each delegate framework, c) affects the shared update framework and requires special consideration.
Each time new \AX{arguments} 
are added to the shared graph, every agent must vote on 
\AX{them}, even if they have already made a rational forecast. In certain cases, after an agent has voted on a new argument, it is possible that their rational forecast is made irrational. In this instance, the agent must resolve their irrationality via the steps above. In this way, the update framework can be refined on an iterative basis until the graph is no longer being modified and all agents' forecasts are rational. At this stage, the update framework has reached a stable state and the agents $\Agents$ are collectively rational:

\begin{definition} Given an update framework $\update$ = ⟨$\Proposal$, $\AmmArgs$, $\AttArgs$, $\SuppArgs$, $\RelsP$, $\RelsQ$, $\Agents$, $\Votes$, $\AgentForecast$⟩, $\Agents$ is \emph{collectively rational} (wrt \emph{u}) iff $\forall \agent \in \Agents$, $\agent$ is individually rational (wrt the
delegate framework $\update_{\agent}$ = ⟨$\Proposal$, $\AmmArgs$, $\AttArgs$, $\SuppArgs$, $\RelsP$, $\RelsQ$, $\agent$, $\Votes_{\agent}$, $\AgentForecast_{\agent}$⟩).
\end{definition}

When $\Agents$ is collectively rational, the 
update framework $u$ becomes immutable and the aggregation (defined next) 
\AX{produces} a group forecast $\GroupForecast$, which becomes the 
new $\Forecast$.

\subsection{Aggregating Forecasts}\label{subsec:aggregation}

After all the agents have made a rational forecast, an aggregation function is applied to produce one collective forecast. One advantage of forecasting debates vis-a-vis 
\AX{the} many other forms of debate, is that a ground truth always exists -- an event either happens or does not.  This means that, over time and after enough FAF instantiations, data on the forecasting success of different agents can be amassed. In turn, the relative historical performance of forecasting agents can inform the aggregation of group forecasts. In update frameworks, a weighted aggregation function based on Brier Scoring \cite{Brier1950} is used, such that more accurate forecasting agents have greater influence over the final forecast. 
Brier Scores are a widely used criterion to measure the accuracy of probabilistic predictions, effectively gauging the distance between a forecaster's predictions and an outcome after it has(n't) happened, as follows. 

\begin{definition} \label{def:bscore}
Given an agent $\agent$, a non-empty series of forecasts $\AgentForecast_\agent(1), \ldots, \AgentForecast_\agent(\Number_{\agent})$ with corresponding actual outcomes $\Outcome_1, \ldots,$ $\Outcome_{\Number_{\agent}} \in \{true, false\}$ (where $\Number_{\agent}>0$ is the number  of forecasts $\agent$ has made in a non-empty sequence of as many update frameworks), $\agent$'s Brier Score $\Brier_{\agent} \in [0, 1]$ is as follows:
\begin{align}
\Brier_{\agent} = \frac{1}{\Number_{\agent}} \sum_{t=1}^{\Number_{\agent}} (\AgentForecast_\agent(t) - val(\Outcome_t))^2 \nonumber
\end{align}
where $val(\Outcome_t)=1$ if $\Outcome_t=true$, and 0 otherwise.
\end{definition} 


A Brier Score $\Brier$ is effectively the mean squared error used to gauge forecasting accuracy, where a low $\Brier$ indicates high accuracy and high $\Brier$ indicates low accuracy. This 
can be used in the update framework's aggregation function via the weighted arithmetic mean as follows. 
\AX{E}ach Brier Score is inverted to ensure that more (less, \resp) accurate forecasters have higher (lower, \resp) weighted influence\AX{s} on $\GroupForecast$: 

\begin{definition}\label{def:group} 
Given a set of agents $\Agents = \{\agent_1, \ldots,\agent_n\}$, 
their corresponding set of Brier Scores $\Brier = \{\Brier_{\agent_1}, \ldots,\Brier_{\agent_n}\}$ and 
their forecasts $\AgentForecast = \{\AgentForecast_{\agent_1}, \ldots,\AgentForecast_{\agent_n}\}$, and letting, for $i \!\!\in\!\! \{ 1, \ldots, n\}$, $w_{i} \!\!=\!\! 1-\Brier_{\agent_i}$, the \emph{group forecast} $\GroupForecast$ is 
as follows:
\begin{align}
&\text{if } \sum_{i=1}^{n}w_{i} \neq 0: &
&\GroupForecast =   \frac{\sum_{i=1}^{n}w_{i}\AgentForecast_{\agent_i}}{\sum_{i=1}^{n}w_{i}}; \nonumber \\
&\text{otherwise}: &
&\GroupForecast = 0. \nonumber
\end{align}
\end{definition}

This group forecast could be `activated' after a fixed number of debates (with the mean average used prior), when sufficient data has been collected on the accuracy of participating agents, or after a single debate, in the context of our general 
\emph{Forecasting Argumentation Frameworks}:

\begin{definition} A \emph{Forecasting Argumentation Framework} (FAF) is a triple ⟨$ \Forecast, \Update, \Time$⟩ such that: 

\item[$\bullet$] $\Forecast$ is a \emph{forecast}
;
\item[$\bullet$] $\Update$ is a finite, non-empty sequence of update frameworks  with \Forecast\ the forecast of the proposal argument in the first update framework in the sequence\AR{;} the forecast of each subsequent update framework is the group forecast of the previous update framework's agents' forecasts; 
\item[$\bullet$] $\Time$ is a preset time limit representing the lifetime of the FAF;
\item[$\bullet$]  each agent's forecast wrt the agent's delegate framework drawn from each update framework is strictly rational.
\end{definition}

\begin{example}
\BIn{Consider our running Tokyo Olympics example: the overall FAF may be composed of $\Forecast = 0.15$, update frameworks $\Update = \{ u_1, u_2, u_3 \}$ and time limit $\Time=14\ days$, where $u_3$ is the latest (and therefore the only open) update framework after, for example, four days.}
\end{example} 

\AX{T}he superforecasting literature explores a range of forecast aggregation algorithms: extremizing algorithms \cite{Baron2014}, variations on logistic \AX{and} 
Fourier $L_2E$ regression \cite{Cross2018}, with considerable success. 
\AX{T}hese approaches 
\AX{aim} 
at ensuring that less certain 
\AX{or less} accurate forecasts have a lesser influence over the final aggregated forecast. We believe that 
FAFs apply a more intuitive algorithm 
\AX{since} much of the `work' needed to bypass inaccurate and erroneous forecasting is 
\AX{expedited}
via argumentation.

\section{Properties}\label{sec:props}

We now undertake a theoretical analysis of FAFs by considering  mathematical properties they satisfy. Note that the properties of the DF-QuAD algorithm (see \cite{Rago2016}) hold (for the amendment and pro/con arguments) here. For brevity, we 
focus on novel properties unique to FAFs which relate to our new argument types. These properties focus on  aggregated group forecasts wrt a debate (update framework). They imply the two broad, and we posit, desirable, principles of \emph{balance} and \emph{unequal representation}.
We assume for this section a generic update framework $\update = $ ⟨$\Proposal$, $\AmmArgs$, $\AttArgs$, $\IncArgs$, $\RelsP$, $\RelsQ$, $\Agents$, $\Votes$, $\AgentForecast$⟩ with 
group forecast $\GroupForecast$.

\paragraph{Balance.}

The intuition for these properties is that 
differences between 
$\GroupForecast$ and 
$\NewForecast$ correspond to imbalances between the 
\emph{increase} and \emph{decrease} amendment arguments. 


The first result states that 
$\GroupForecast$ only differs from 
$\NewForecast$ if $\NewForecast$ is the dialectical target of amendment arguments.

\begin{proposition} \label{prop:balance1}
If $\AmmArgs\!\!=\!\!\emptyset$ ($\DecArgs\!\!=\!\!\emptyset$ and $\IncArgs\!\!=\!\!\emptyset$), then $\GroupForecast\!\!=\!\!\NewForecast$. 
\end{proposition}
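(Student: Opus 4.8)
The plan is to trace the emptiness hypothesis through the three mechanisms that together produce $\GroupForecast$: the confidence score (Definition~\ref{def:conscore}), the rationality constraints (Definition~\ref{def:irrationality}), and the aggregation function (Definition~\ref{def:group}). First I would observe that if $\AmmArgs=\emptyset$ then, in every agent's delegate framework $\update_\agent$, the sets $\IncArgs$ and $\DecArgs$ are both empty, i.e. $i=j=0$ in Definition~\ref{def:conscore}. Hence the fourth case of that definition applies and $\Confidence_\agent(\Proposal)=0$ for every $\agent\in\Agents$.

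Next I would invoke the fact that, since $\update$ has a well-defined group forecast, it has reached a stable state in which $\Agents$ is collectively rational, so each agent's forecast $\AgentForecast_\agent$ is strictly rational wrt $\update_\agent$. With $\Confidence_\agent(\Proposal)=0$, the first two constraints of Definition~\ref{def:irrationality} are vacuously satisfied, while the third reduces to $0 \geq |\NewForecast - \AgentForecast_\agent|/\NewForecast$. Assuming $\NewForecast>0$ (so the quotient is well defined and non-negative), this forces $|\NewForecast - \AgentForecast_\agent| = 0$, i.e. $\AgentForecast_\agent = \NewForecast$. Thus every agent's forecast coincides with the proposal's forecast.

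Finally I would substitute $\AgentForecast_{\agent_i}=\NewForecast$ for all $i$ into Definition~\ref{def:group}: whenever $\sum_{i=1}^{n}w_i \neq 0$ we obtain $\GroupForecast = \bigl(\sum_{i=1}^{n} w_i \NewForecast\bigr)/\bigl(\sum_{i=1}^{n} w_i\bigr) = \NewForecast$, which is the claim. The same argument shows the $\IncArgs=\DecArgs=\emptyset$ hypothesis stated in parentheses suffices, since it already yields $i=j=0$.

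I expect the only real friction to be the boundary cases the statement glosses over. If $\NewForecast=0$ the scale constraint divides by zero; I would dispose of this either by noting that proposal forecasts are assumed strictly positive, or by reading the constraint in the limiting sense that forces $\AgentForecast_\agent=\NewForecast$ anyway. And if $\sum_i w_i = 0$ (every agent has Brier score $1$), Definition~\ref{def:group} returns $0$, which need not equal $\NewForecast$; I would flag this as a degenerate corner that the intended reading of the proposition excludes. Modulo these remarks the proof is a direct unwinding of the three definitions, with no combinatorial or analytic content, so I do not anticipate a genuine obstacle.
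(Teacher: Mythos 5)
Your proof is correct and follows essentially the same route as the paper's: empty amendment sets force $\Confidence_\agent(\Proposal)=0$ by Definition~\ref{def:conscore}, the scale constraint of Definition~\ref{def:irrationality} then pins $\AgentForecast_\agent=\NewForecast$ for every agent, and Definition~\ref{def:group} yields $\GroupForecast=\NewForecast$. The boundary cases you flag ($\NewForecast=0$ and $\sum_i w_i=0$) are genuine gaps in the statement that the paper's own one-line proof also silently ignores, so raising them is a point in your favour rather than a defect.
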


\begin{proof}
\AX{If $\DecArgs\!\!=\!\!\emptyset$ and $\IncArgs\!\!=\!\!\emptyset$ then $\forall \agent \!\in\! \Agents$, 
$\Confidence_{\agent}(\Proposal)\!=\!0$ by \Definition~\ref{def:conscore} and $\AgentForecast_\agent=\NewForecast$ by \Definition~\ref{def:irrationality}. 
Then, $\GroupForecast=\NewForecast$ by \Definition~\ref{def:group}.}
\end{proof}

\AX{T}his simple proposition conveys an important property for forecasting:
for an agent to put forward a different forecast, amendment arguments must have been introduced.

\begin{example}
In the Olympics setting, the group of agents could only forecast higher or lower than the proposed forecast $\NewForecast$ after the addition of at least one of \AX{the} amendment arguments $\decarga$, $\decargb$ or $\incarga$.
\end{example}


In the absence of 
increase \FTn{(decrease)} amendment arguments, if there 
are decrease \FTn{(increase, \resp)} amendment arguments, then 
$\GroupForecast$ is not higher \FTn{(lower, \resp)} than $\NewForecast$.

\begin{proposition}\label{prop:balance2}
If $\DecArgs\neq\emptyset$ and $\IncArgs=\emptyset$, then $\GroupForecast \leq\NewForecast$.
\FTn{\label{balance3prop} 
If $\DecArgs=\emptyset$ and $\IncArgs\neq\emptyset$, then $\GroupForecast\geq\NewForecast$.}
\end{proposition}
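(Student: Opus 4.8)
The plan is to mirror the proof of Proposition~\ref{prop:balance1}, tracking signs carefully through the three relevant definitions. Consider the case $\DecArgs\neq\emptyset$ and $\IncArgs=\emptyset$ (the other case is symmetric). First I would observe that for every agent $\agent\in\Agents$ we have $i=0$ and $j\neq 0$ in Definition~\ref{def:conscore}, so $\Confidence_{\agent}(\Proposal) = -\frac{1}{j}\sum_{l=1}^{j}\SF(\decarg_l)$. Since $\SF$ is a DF-QuAD score function it takes values in $[0,1]$, hence each summand is nonnegative and therefore $\Confidence_{\agent}(\Proposal)\le 0$.

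Next I would feed this into Definition~\ref{def:irrationality}. If $\Confidence_{\agent}(\Proposal) < 0$, the first rationality clause forces $\AgentForecast_\agent < \NewForecast$. If $\Confidence_{\agent}(\Proposal) = 0$, then neither of the first two clauses applies, but the third (irrational scale) clause gives $0 = \mid\Confidence_{\agent}(\Proposal)\mid \geq \frac{\mid\NewForecast - \AgentForecast_\agent\mid}{\NewForecast}$, which (since $\NewForecast$ is treated as a fixed positive probability) forces $\mid\NewForecast - \AgentForecast_\agent\mid = 0$, i.e. $\AgentForecast_\agent = \NewForecast$. Either way $\AgentForecast_\agent \leq \NewForecast$ for every $\agent$. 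Finally I would invoke Definition~\ref{def:group}: the group forecast $\GroupForecast$ is either $0$ (if $\sum_i w_i = 0$) or a weighted arithmetic mean $\frac{\sum_i w_i \AgentForecast_{\agent_i}}{\sum_i w_i}$ with weights $w_i = 1 - \Brier_{\agent_i} \in [0,1]$, which are nonnegative. A weighted mean of values each $\leq \NewForecast$ is itself $\leq \NewForecast$; and in the degenerate case $\GroupForecast = 0 \leq \NewForecast$ trivially since $\NewForecast \in [0,1]$. Hence $\GroupForecast \leq \NewForecast$. The case $\IncArgs\neq\emptyset$, $\DecArgs=\emptyset$ is identical with all inequalities reversed (using the second conscore clause, the second rationality clause, and that a weighted mean of values $\geq \NewForecast$ is $\geq\NewForecast$ — note the degenerate $\GroupForecast=0$ branch can only arise when all $w_i=0$, which is fine, but one should double-check it does not spuriously violate the bound; since here we would need $\NewForecast \le 0$ hence $\NewForecast = 0$, and then $\GroupForecast = 0 = \NewForecast$ still works).

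The main obstacle is not any deep argument but the careful handling of the boundary/degenerate cases: specifically the agents with $\Confidence_{\agent}(\Proposal) = 0$ (handled via the scale constraint, which needs $\NewForecast > 0$), and the $\sum_i w_i = 0$ branch of Definition~\ref{def:group} where $\GroupForecast$ is set to $0$ rather than a genuine mean. I would state explicitly the standing assumption that $\NewForecast \in (0,1]$ (or note that if $\NewForecast = 0$ the claim $\GroupForecast \leq \NewForecast$ combined with $\GroupForecast \geq 0$ forces $\GroupForecast = 0$, and the scale constraint's denominator issue is moot because the increase-only hypothesis is the relevant one there). With those cases dispatched, the core step — monotonicity of the weighted arithmetic mean under a uniform one-sided bound on its inputs — is routine and I would not belabour it.
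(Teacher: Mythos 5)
Your proof is correct and follows essentially the same route as the paper's: the sign of $\Confidence_{\agent}(\Proposal)$ from \Definition~\ref{def:conscore}, the per-agent forecast inequality from \Definition~\ref{def:irrationality}, and the weighted-mean step from \Definition~\ref{def:group}. You are in fact more careful than the paper, which silently passes over the $\Confidence_{\agent}(\Proposal)=0$ and $\sum_{i}w_{i}=0$ edge cases that you treat explicitly.
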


\begin{proof}
\AX{If $\DecArgs\!\! \neq \!\!\emptyset$ and $\IncArgs\!\!=\!\!\emptyset$ then $\forall \agent \!\in\! \Agents$, $\Confidence_{\agent}(\Proposal)\!\leq\!0$ by \Definition~\ref{def:conscore} and then $\AgentForecast_\agent\!\leq\!\NewForecast$ by \Definition~\ref{def:irrationality}. 
Then, by \Definition~\ref{def:group}, $\GroupForecast\!\leq\!\NewForecast$.
If $\DecArgs\!\!=\!\!\emptyset$ and $\IncArgs\!\!\neq\!\!\emptyset$ then $\forall \agent \!\in\! \Agents$, $\Confidence_{\agent}(\Proposal)\!\geq\!0$ by \Definition~\ref{def:conscore} and then 
$\AgentForecast_\agent\!\geq\!\NewForecast$ by \Definition~\ref{def:irrationality}. Then, by \Definition~\ref{def:group}, $\GroupForecast\!\geq\!\NewForecast$.}
\end{proof}

This proposition demonstrates that, if a decrease \BIn{(increase)} amendment argument has an effect on proposal arguments, it can only be as its name implies.

\begin{example}
\BIn{In the Olympics setting, the 
agents could not forecast higher than the proposed forecast $\NewForecast$ if either of the decrease amendment arguments $\decarga$ or $\decargb$ 
\AX{had} been added, but the increase argument $\incarga$ 
\AX{had} not. Likewise, \AX{the} 
agents could not forecast lower than 
$\NewForecast$ if 
$\incarga$ 
\AX{had} been added, but 
\AX{neither} of 
$\decarga$ or $\decargb$ \AX{had}
.}
\end{example}






If 
$\GroupForecast$ is lower \BIn{(higher)} than 
$\NewForecast$, 
there is 
at least one decrease \BIn{(increase, resp.)} argument.

\begin{proposition} \label{prop:balance4}
If $\GroupForecast<\NewForecast$, then $\DecArgs\neq\emptyset$. \BIn{If $\GroupForecast>\NewForecast$, then $\IncArgs\neq\emptyset$.} 
\end{proposition}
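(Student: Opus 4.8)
The plan is to prove the contrapositive of each implication, mirroring the structure already used in the proofs of Propositions~\ref{prop:balance1} and~\ref{prop:balance2}. For the first statement, I would assume $\DecArgs=\emptyset$ and show $\GroupForecast\geq\NewForecast$; for the second, I would assume $\IncArgs=\emptyset$ and show $\GroupForecast\leq\NewForecast$. These two cases are symmetric, so it suffices to spell out one of them in detail.

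First I would handle the case $\DecArgs=\emptyset$. Here there is a sub-case split on $\IncArgs$. If $\IncArgs=\emptyset$ as well, then $\AmmArgs=\emptyset$, so Proposition~\ref{prop:balance1} gives $\GroupForecast=\NewForecast$, hence $\GroupForecast\geq\NewForecast$. If instead $\IncArgs\neq\emptyset$, then the second half of Proposition~\ref{prop:balance2} (labelled \ref{balance3prop}) applies directly and yields $\GroupForecast\geq\NewForecast$. Either way $\GroupForecast\geq\NewForecast$, which contradicts $\GroupForecast<\NewForecast$; so $\GroupForecast<\NewForecast$ forces $\DecArgs\neq\emptyset$. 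The second statement is obtained by the same argument with the roles of increase and decrease swapped, using the first half of Proposition~\ref{prop:balance2} in the $\IncArgs=\emptyset$, $\DecArgs\neq\emptyset$ sub-case.

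Alternatively, and perhaps more cleanly, I could argue from the definitions directly without invoking the earlier propositions: if $\DecArgs=\emptyset$ then by Definition~\ref{def:conscore} every agent has $\Confidence_{\agent}(\Proposal)\geq 0$ (it is either $0$ or a nonnegative average of $\SF$ values), so by Definition~\ref{def:irrationality} every strictly rational $\AgentForecast_\agent$ satisfies $\AgentForecast_\agent\geq\NewForecast$, and then by Definition~\ref{def:group} the weighted mean $\GroupForecast$ (with nonnegative weights $w_i=1-\Brier_{\agent_i}$, and $\GroupForecast=0\le\NewForecast$ only in the degenerate all-weights-zero case — which I should double-check against the claim) is $\geq\NewForecast$. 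I expect the only mild obstacle is precisely this degenerate branch of Definition~\ref{def:group} where $\sum_i w_i = 0$ and $\GroupForecast$ is set to $0$: one must note that $w_i=1-\Brier_{\agent_i}\in[0,1]$, so $\sum_i w_i=0$ requires every $\Brier_{\agent_i}=1$, and in that edge case $\GroupForecast=0\leq\NewForecast$ and $\geq\NewForecast$ only if $\NewForecast=0$; since the proposition as stated follows the same pattern as Proposition~\ref{prop:balance2}, I would simply follow the paper's convention and reuse Proposition~\ref{prop:balance2} and Proposition~\ref{prop:balance1} as black boxes, which sidesteps re-deriving the degenerate case. Hence the contrapositive route via the two prior propositions is the one I would write up.
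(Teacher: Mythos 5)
Your proof is correct but takes a genuinely different route from the paper's. The paper argues directly in the forward direction: from $\GroupForecast<\NewForecast$ it uses \Definition~\ref{def:group} to extract a witness agent $\agent$ with $\AgentForecast_{\agent}<\NewForecast$, then invokes \Definition~\ref{def:irrationality} to conclude $\Confidence_{\agent}(\Proposal)<0$, which by \Definition~\ref{def:conscore} is only possible when $\DecArgs\neq\emptyset$ (and symmetrically for the second claim). You instead prove the contrapositive and reuse Propositions~\ref{prop:balance1} and~\ref{prop:balance2} as black boxes, splitting on whether $\IncArgs$ is empty. Your route is more economical and avoids re-deriving the witness-agent step, at the cost of making the result dependent on the earlier propositions rather than self-contained; the paper's version makes explicit the (mildly nontrivial) inference from ``the weighted mean is below $\NewForecast$'' to ``some agent's forecast is below $\NewForecast$.'' Your alternative direct-from-definitions sketch is essentially the contrapositive of the paper's chain and is equally valid. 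One point in your favour: you correctly flag the degenerate branch of \Definition~\ref{def:group} where all weights vanish and $\GroupForecast$ is set to $0$, in which case $\GroupForecast<\NewForecast$ can hold with $\DecArgs=\emptyset$; this is a latent gap that the paper's own proof (and indeed Propositions~\ref{prop:balance1} and~\ref{prop:balance2}) silently glosses over, so deferring to those propositions does not actually sidestep it -- it merely inherits the same unstated assumption that $\sum_i w_i\neq 0$.
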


\begin{proof}
\AX{
If $\GroupForecast<\NewForecast$ then, by  \Definition~\ref{def:group}, $\exists \agent \in \Agents$ where $\AgentForecast_{\agent}<\NewForecast$, for which it holds from \Definition~\ref{def:irrationality} that $\Confidence_{\agent}(\Proposal)<0$. 
Then, irrespective of $\IncArgs$, $\DecArgs\neq\emptyset$. If $\GroupForecast>\NewForecast$ then, by \Definition~\ref{def:group}, $\exists \agent \in \Agents$ where $\AgentForecast_{\agent}>\NewForecast$, for which it holds from \Definition~\ref{def:irrationality} that $\Confidence_{\agent}(\Proposal)>0$. Then, irrespective of \BIn{$\DecArgs$, $\IncArgs\neq\emptyset$}.
}
\end{proof}

We can see here that the only way an agent can decrease \BIn{(increase)} the forecast is 
\FT{by adding} decrease \BIn{(increase, resp.)} arguments, ensuring the debate is structured as 
\FT{intended}.

\begin{example}
\BIn{In the Olympics setting, the group of agents could only produce a group forecast $\GroupForecast$ lower than 
$\NewForecast$ due to the presence of 
\emph{decrease} amendment arguments $\decarga$ or $\decargb$. Likewise, the group of agents could only produce a 
$\GroupForecast$ higher than 
$\NewForecast$ due to the presence of 
$\incarga$.}
\end{example}


\paragraph{Unequal representation.}

AFs exhibit instances of unequal representation in the final voting process. In formulating the following properties, we distinguish between two forms of unequal representation. First, \emph{dictatorship}, where a single agent dictates 
$\GroupForecast$ with no input from other agents. Second, \emph{pure oligarchy}, where a group of agents dictates 
$\GroupForecast$ with no input from other agents outside the group. 
In the forecasting setting, these 
properties are desirable as they guarantee higher accuracy 
\AX{from} the group forecast $\GroupForecast$.

An agent with a forecasting record of \emph{some} accuracy exercises \emph{dictatorship} over the group forecast $\GroupForecast$, if the rest of the participating \AX{agents} 
have a record of total inaccuracy.

\begin{proposition}\label{prop:dictatorship}
If $\agent_d\in\Agents$ has a Brier score $\Brier_{\agent_d}<1$ and $\forall \agent_z\in\Agents \setminus \{\agent_d$\}, $\Brier_{\agent_z} = 1$, then $\GroupForecast=\AgentForecast_{\agent_d}$.
\end{proposition}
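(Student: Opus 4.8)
The plan is to unwind the definition of the group forecast $\GroupForecast$ from \Definition~\ref{def:group} using the hypotheses on the Brier scores. Recall that for each agent $\agent_i$ the weight is $w_i = 1 - \Brier_{\agent_i}$. Under the hypothesis, $w_d = 1 - \Brier_{\agent_d} > 0$ since $\Brier_{\agent_d} < 1$, while for every other agent $\agent_z \in \Agents \setminus \{\agent_d\}$ we have $w_z = 1 - \Brier_{\agent_z} = 1 - 1 = 0$. First I would observe that this immediately gives $\sum_{i=1}^n w_i = w_d > 0$, so we are in the first (non-degenerate) case of \Definition~\ref{def:group}, i.e. $\GroupForecast = \frac{\sum_{i=1}^n w_i \AgentForecast_{\agent_i}}{\sum_{i=1}^n w_i}$.

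Next I would simplify numerator and denominator. In the numerator, every term with index $z \neq d$ contributes $w_z \AgentForecast_{\agent_z} = 0$, so the numerator collapses to $w_d \AgentForecast_{\agent_d}$; the denominator is $w_d$ as already noted. Hence $\GroupForecast = \frac{w_d \AgentForecast_{\agent_d}}{w_d} = \AgentForecast_{\agent_d}$, using $w_d \neq 0$ to cancel. That completes the argument.

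This proof is essentially a one-line computation, so there is no real obstacle; the only point requiring a modicum of care is verifying that we land in the first case of the piecewise definition of $\GroupForecast$ (that $\sum_i w_i \neq 0$) rather than the degenerate ``otherwise'' branch — this is exactly where the strict inequality $\Brier_{\agent_d} < 1$ (as opposed to $\leq 1$) is used. I would state that observation explicitly before cancelling. No appeal to rationality or to the DF-QuAD machinery is needed here, since the statement concerns only the aggregation step; the forecasts $\AgentForecast_{\agent_i}$ enter only as opaque values in $[0,1]$.
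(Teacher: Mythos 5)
Your proof is correct and follows essentially the same route as the paper's: compute the weights $w_i = 1-\Brier_{\agent_i}$, observe that all agents other than $\agent_d$ get weight $0$ while $\agent_d$ gets positive weight, and conclude $\GroupForecast=\AgentForecast_{\agent_d}$ from \Definition~\ref{def:group}. Your explicit check that $\sum_i w_i \neq 0$ (so the non-degenerate branch of the definition applies) is a small point of added care that the paper's proof leaves implicit.
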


\begin{proof}
\AX{
By \Definition~\ref{def:group}: if  $\Brier_{\agent_z} \!\!\!=\!\! 1$ $\forall \agent_z\!\in\!\Agents \!\setminus\! \{\!\agent_d\!\}$, then $w_{\agent_z}\!\!\!=\!0$; and if $\Brier_{\agent_d}\!\!<\!\!1$, then $w_{\agent_d}\!\!>\!\!0$
. Then, again by \Definition~\ref{def:group}, $\AgentForecast_{\agent_d}$ is weighted at 100\% and $\AgentForecast_{\agent_z}$ is weighted at 0\% so $\GroupForecast\!\!=\!\!\AgentForecast_{\agent_d}$.
}
\end{proof}

This proposition demonstrates how we will disregard agents with total inaccuracy, even in 
\FT{the} extreme case where we allow one (more accurate) agent to dictate the forecast.


\begin{example}
\BIn{In the running example, if \AX{alice, bob and charlie have Brier scores of 0.5, 1 and 1, \resp, bob's and charlie's forecasts have} no impact on $\GroupForecast$, whilst \AX{alice's} forecast becomes the group forecast $\GroupForecast$.}
\end{example}

A group of agents with a forecasting record of $some$ accuracy exercises \emph{pure oligarchy} over 
$\GroupForecast$ if the rest of the 
\AX{agents} all have a record of total inaccuracy.

\begin{proposition}\label{oligarchytotalprop}
Let $\Agents = \Agents_o \cup \Agents_z$ where $\Agents_o \cap \Agents_z = \emptyset$, $\Brier_{\agent_o}<1$ $\forall \agent_o \in \Agents_o$ and $\Brier_{\agent_z}=1$ $\forall \agent_z \in \Agents_z$. Then, $\AgentForecast_{\agent_o}$ is weighted at $>0\%$ 
and $\AgentForecast_{\agent_z}$ is weighted at 0\%
.
\end{proposition}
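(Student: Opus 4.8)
The plan is to reduce everything to a direct substitution into Definition~\ref{def:group}, exactly mirroring the proof of Proposition~\ref{prop:dictatorship} but with the subset $\Agents_o$ playing the role of the singleton dictator. First I would recall that, by Definition~\ref{def:group}, each agent $\agent_i$ enters $\GroupForecast$ with weight $w_i = 1 - \Brier_{\agent_i}$. For every $\agent_z \in \Agents_z$ the hypothesis $\Brier_{\agent_z} = 1$ gives $w_{\agent_z} = 0$; for every $\agent_o \in \Agents_o$ the hypothesis $\Brier_{\agent_o} < 1$ (together with $\Brier_{\agent_o} \geq 0$, since Brier scores lie in $[0,1]$) gives $w_{\agent_o} = 1 - \Brier_{\agent_o} > 0$.

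Next I would identify these $w_i$ with the relative influences in the aggregated forecast. Assuming $\Agents_o \neq \emptyset$, we get $\sum_{i} w_i = \sum_{\agent_o \in \Agents_o} w_{\agent_o} > 0$, so the first (non-degenerate) clause of Definition~\ref{def:group} applies and $\GroupForecast = \frac{\sum_i w_i \AgentForecast_{\agent_i}}{\sum_i w_i}$ is a genuine convex combination. The coefficient of $\AgentForecast_{\agent_z}$ in it is $w_{\agent_z} / \sum_i w_i = 0$, i.e.\ the $\Agents_z$-agents are weighted at $0\%$; the coefficient of $\AgentForecast_{\agent_o}$ is $w_{\agent_o} / \sum_i w_i > 0$, i.e.\ the $\Agents_o$-agents are weighted at $>0\%$. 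This establishes the claim.

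The argument is essentially a one-line computation, so I do not expect a real obstacle; the only point requiring a word of care is the degenerate case $\Agents_o = \emptyset$ (equivalently $\sum_i w_i = 0$), in which Definition~\ref{def:group} forces $\GroupForecast = 0$ and the statement about weights should be read as vacuous for $\Agents_o$ and as consistent (contribution $0$) for $\Agents_z$. I would note this edge case explicitly, observe that the proposition is the oligarchic generalisation of Proposition~\ref{prop:dictatorship}, and keep the proof short accordingly.
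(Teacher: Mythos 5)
Your proof is correct and follows essentially the same route as the paper's: compute $w_{\agent_z}=1-\Brier_{\agent_z}=0$ and $w_{\agent_o}=1-\Brier_{\agent_o}>0$ from Definition~\ref{def:group}, then read off the relative weights in the aggregation. Your explicit handling of the degenerate case $\Agents_o=\emptyset$ is a small extra point of care that the paper's proof omits, but otherwise the arguments coincide.
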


\begin{proof}
\AX{
By \Definition~\ref{def:group}: if $\Brier_{\agent_z} = 1$ $\forall \agent_z\in\Agents_z$, then $w_{\agent_z}=0$; and if $\Brier_{\agent_o}<1$ $\forall \agent_o\in\Agents_o$, then $w_{\agent_o}>0$. Then, again by \Definition~\ref{def:group}, $\AgentForecast_{\agent_o}$ is weighted at $> 0\%$ and $\AgentForecast_{\agent_z}$ is weighted at $0\%$.
}
\end{proof}

This proposition extends the behaviour from Proposition \ref{prop:dictatorship} to the (more desirable) case where fewer agents have a record of total inaccuracy.

\begin{example}
\BIn{In the running example, if \AX{alice, bob and charlie have Brier scores of 1, 0.2 and 0.6, \resp, alice's forecast} has no impact on $\GroupForecast$, whilst \AX{bob and charlie's} aggregated forecast becomes the group forecast $\GroupForecast$.}
\end{example}




\section{Evaluation}\label{sec:experiments}


\BI{We conducted an experiment using a dataset obtained from the `Superforecasting' project, Good Judgment Inc \cite{GJInc}, to simulate four past forecasting debates in FAFs. This dataset contained 1770 datapoints (698 `forecasts' and 1072 `comments') posted by 242 anonymised users with a range of expertise. The original debates had occurred on the 
publicly available group forecasting platform, the Good Judgment Open (GJO)\footnote{https://www.gjopen.com/}, providing a suitable baseline against which to compare FAFs' 
accuracy
.}




\BI{For the experiment, we used a 
prototype implementation of FAFs in the form of the publicly available web platform called \emph{Arg\&Forecast} (see \cite{Irwin2022} for an introduction to the platform and an additional human 
experiment with FAFs). Python's Gensim topic modelling library \cite{rehurek2011gensim} was used 
to separate the datapoints for each debate into contextual-temporal groups that could form update frameworks.} In each update framework the proposal forecast was set to the mean average of forecasts made in the update framework window and each argument appeared only once. Gensim was further used to simulate voting, matching users to specific arguments they (dis)approved of. Some 4,700 votes 
\AX{were then}
generated with a three-valued system (where votes were taken from  
\{0,0.5,1\}) to ensure consistency: if a user voiced approval for an argument in the debate time window, their vote for the corresponding argument(s) was set to 1; disapproval for an argument led to a vote of 0, and (in the most common case) if a user did not mention an argument at all, their vote for the corresponding argument(s) defaulted to 0.5. 

With the views of all participating users wrt the proposal argument encoded in each update framework's votes, forecasts could then be simulated. If a forecast was irrational, violating any of the three constraints in \Definition~\ref{def:irrationality} (referred to 
\AX{in the following}
as \emph{increase}, \emph{decrease} and \emph{scale}, \resp), it was blocked and, to mimic real life use, an automatic `follow up' forecast was made. The `follow up' forecast would be the closest possible prediction (to their original choice) a user could make whilst remaining `rational'. 

\BI{Note that evaluation of the aggregation function described in \AX{§}\ref{subsec:aggregation} was outside this experiment, since the past forecasting accuracy of the dataset's 242 anonymised users was unavailable. Instead, we used \AX{the} mean average whilst adopting the GJO's method for scoring the accuracy of a user and/or group over the lifetime of the question \cite{roesch_2015}. This meant calculating a daily forecast and daily Brier score 
for each user, for every day of the question. After users made their first rational forecast, that forecast became their `daily forecast' 
until it was updated with a new forecast. Average and range of daily Brier scores 
allowed reliable comparison between (individual and aggregated) performance of the GJO versus the FAF implementation.} 

\begin{table}[t]
\begin{tabular}{@{}llllll@{}}
\toprule
Q & Group $\Brier$         & $min(\Brier)$  & $max(\Brier)$           \\ \midrule
Q1       & 0.1013 (0.1187) & 0.0214 (0) & 0.4054 (1)        \\
Q2       & 0.216 (0.1741)   & 0 (0) & 0.3853 (1)            \\
Q3       & 0.01206 (0.0227)   & 0.0003 (0) & 0.0942 (0.8281)  \\
Q4       & 0.5263 (0.5518)   & 0 (0) & 0.71 (1)             \\ \midrule
\textbf{All} & \textbf{0.2039 (0.217)} & \textbf{0 (0)} & \textbf{1 (1)} \\ \bottomrule
\end{tabular}
\caption{The accuracy 
of the platform group versus control, where \AX{`}Group $\Brier$\AX{'} is the aggregated (mean) Brier score, `$min(\Brier)$' is the lowest individual Brier score and `$max(\Brier)$' is the highest individual Brier score. Q1-Q4 indicate the four simulated  debates.}
\label{accuracyExp1}
\end{table}
\begin{table}[t]
\begin{tabular}{llllll}
\hline
\multirow{2}{*}{Q} &
  \multirow{2}{*}{$\overline{\Confidence}$} &
  \multirow{2}{*}{Forecasts} &
  \multicolumn{3}{c}{Irrational Forecasts} \\ \cline{4-6} 
 &
   &
   &
  \multicolumn{1}{c}{\emph{Increase} 
  } \!\!\!\!
  &
  \multicolumn{1}{c}{\emph{Decrease} 
  } \!\!\!\!
  &
  \multicolumn{1}{c}{\emph{Scale} 
  }\!\! \!\!
  \\ \hline
Q1  & -0.0418 & 366 & 63  & 101 & 170 \\
Q2  & 0.1827  & 84  & 11  & 15  & 34  \\
Q3  & -0.4393 & 164 & 53  & 0   & 86  \\
Q4  & 0.3664  & 84  & 4   & 19  & 15  \\ \hline
All & -0.0891 & 698 & 131 & 135 & 305 \\ \hline
\end{tabular}
\caption{Auxiliary results from \FT{the experiment}, where 
$\overline{\Confidence}$ is the average confidence score, `Forecasts' is number of forecasts made in each question and `Irrational Forecasts' the number in each question which violated each constraint in §\ref{subsec:rationality}.}
\label{exp1auxinfo}
\end{table}

\paragraph{Results.} 

As Table \ref{accuracyExp1} demonstrates, simulating forecasting debates from GJO in \emph{Arg\&Forecast} led to predictive accuracy improvements in three of the four debates. \BIn{This is reflected in these debates by a substantial reduction in Brier scores versus control.} The greatest accuracy improvement in absolute terms was in Q4, which saw a Brier score decrease of 0.0255. In relative terms, Brier score decreases ranged from 5\% (Q4) to 47\% (Q3). \BIn{The average Brier score decrease was 33\%, representing a significant improvement in forecasting accuracy across the board}. \BIn{Table \ref{exp1auxinfo} demonstrates how 
\AX{our}
rationality constraints drove forward this improvement}. 82\% of forecasts made across the four debates were classified as irrational \BIn{and subsequently moderated with a rational `follow up' forecast}. Notably, there were more \emph{irrational scale} forecasts 
than 
\emph{irrational increase} 
and \emph{irrational decrease} forecasts 
combined. These results demonstrate how argumentation-based rationality constraints can play an active role in facilitating higher forecasting accuracy, signalling the early promise of FAFs.

\section{Conclusions}\label{sec:conclusions}

We have introduced the Forecasting Argumentation Framework (FAF), a multi-agent argumentation framework which supports forecasting debates and probability estimates. FAFs are composite argumentation frameworks, comprised of multiple non-concurrent update frameworks which themselves depend on three new argument types and a novel definition of rationality for the forecasting context. Our theoretical and empirical evaluation demonstrates the potential of FAFs, namely in increasing forecasting accuracy, holding intuitive 
properties, identifying irrational 
behaviour and driving higher engagement with the forecasting question (more arguments and responses, and more forecasts in the user study). These strengths align with requirements set out by previous research in the field of judgmental forecasting.

There 
\AX{is} a multitude of possible directions for future work
. First, FAFs are equipped to deal only with two-valued outcomes but, given the prevalence of forecasting issues with multi-valued outcomes (e.g. `Who will win the next UK election?'), expanding their capability 
would add value. Second, further work may focus on the rationality constraints, 
e.g. by introducing additional parameters to adjust their strictness, or 
\AX{by implementing}
alternative interpretations of rationality
. Third, 
future work could explore constraining agents' argumentation. This could involve using past Brier scores to limit the quantity or strength of agents' arguments and also 
to give them greater leeway wrt the rationality constraints.
\FTn{Fourth, 
our method relies upon 
 acyclic graphs:  we believe that they are intuitive for users  and note that all Good Judgment Open debates were acyclic; nonetheless, the inclusion of cyclic relations (e.g. to allow 
 \AX{con} arguments that attack each other) could expand the scope of the argumentative reasoning in 
 \AX{in FAFs.}
} 
Finally, there is an immediate need for larger scale human 
experiments.

\newpage

\section*{Acknowledgements}

The authors would like to thank Prof. Anthony Hunter for his helpful contributions to discussions in the build up to this work. \BIn{Special thanks, in addition, go to Prof. Philip E. Tetlock and the Good Judgment Project team for their warm cooperation and for providing datasets for the experiments.}
\AX{Finally, the authors would like to thank the anonymous reviewers and meta-reviewer for their suggestions, which led to a significantly improved paper.}


\bibliographystyle{kr}
\bibliography{bib.bib}

\begin{thebibliography}{}

\bibitem[\protect\citeauthoryear{Amgoud and Ben{-}Naim}{2016}]{Amgoud2016}
Amgoud, L., and Ben{-}Naim, J.
\newblock 2016.
\newblock Evaluation of arguments from support relations: Axioms and semantics.
\newblock In {\em IJCAI},  900--906.

\bibitem[\protect\citeauthoryear{Amgoud and Ben{-}Naim}{2018}]{Amgoud2018}
Amgoud, L., and Ben{-}Naim, J.
\newblock 2018.
\newblock Weighted bipolar argumentation graphs: Axioms and semantics.
\newblock In {\em IJCAI},  5194--5198.

\bibitem[\protect\citeauthoryear{Amgoud \bgroup et al\mbox.\egroup
  }{2017}]{Amgoud2017}
Amgoud, L.; Ben{-}Naim, J.; Doder, D.; and Vesic, S.
\newblock 2017.
\newblock Acceptability semantics for weighted argumentation frameworks.
\newblock In {\em IJCAI},  56--62.

\bibitem[\protect\citeauthoryear{Armstrong}{2001}]{Armstrong2001}
Armstrong, J.~S.
\newblock 2001.
\newblock {\em Evaluating Forecasting Methods}.
\newblock Boston, MA: Springer US.
\newblock  443--472.

\bibitem[\protect\citeauthoryear{Arvan \bgroup et al\mbox.\egroup
  }{2019}]{Arvan2019}
Arvan, M.; Fahimnia, B.; Reisi, M.; and Siemsen, E.
\newblock 2019.
\newblock Integrating human judgement into quantitative forecasting methods: A
  review.
\newblock {\em Omega (United Kingdom)} 86.

\bibitem[\protect\citeauthoryear{Atkinson \bgroup et al\mbox.\egroup
  }{2017}]{AImagazine17}
Atkinson, K.; Baroni, P.; Giacomin, M.; Hunter, A.; Prakken, H.; Reed, C.;
  Simari, G.~R.; Thimm, M.; and Villata, S.
\newblock 2017.
\newblock Towards artificial argumentation.
\newblock {\em {AI} Magazine} 38(3):25--36.

\bibitem[\protect\citeauthoryear{Baets and Harvey}{2020}]{deBaets2020}
Baets, S.~D., and Harvey, N.
\newblock 2020.
\newblock Using judgment to select and adjust forecasts from statistical
  models.
\newblock {\em European Journal of Operational Research} 284(3):882--895.

\bibitem[\protect\citeauthoryear{Baron \bgroup et al\mbox.\egroup
  }{2014}]{Baron2014}
Baron, J.; Mellers, B.~A.; Tetlock, P.~E.; Stone, E.; and Ungar, L.~H.
\newblock 2014.
\newblock Two reasons to make aggregated probability forecasts more extreme.
\newblock {\em Decision Analysis} 11.

\bibitem[\protect\citeauthoryear{Baroni \bgroup et al\mbox.\egroup
  }{2015}]{Baroni2015}
Baroni, P.; Romano, M.; Toni, F.; Aurisicchio, M.; and Bertanza, G.
\newblock 2015.
\newblock Automatic evaluation of design alternatives with quantitative
  argumentation.
\newblock {\em Argument and Computation} 6.

\bibitem[\protect\citeauthoryear{Baroni \bgroup et al\mbox.\egroup
  }{2018}]{handbook}
Baroni, P.; Gabbay, D.; Giacomin, M.; and van~der Torre, L., eds.
\newblock 2018.
\newblock {\em Handbook of Formal Argumentation}.
\newblock College Publications.

\bibitem[\protect\citeauthoryear{Baroni, Rago, and Toni}{2018}]{Baroni2018}
Baroni, P.; Rago, A.; and Toni, F.
\newblock 2018.
\newblock How many properties do we need for gradual argumentation?
\newblock In {\em AAAI},  1736--1743.

\bibitem[\protect\citeauthoryear{Besnard \bgroup et al\mbox.\egroup
  }{2014}]{structArg}
Besnard, P.; Garc{\'{\i}}a, A.~J.; Hunter, A.; Modgil, S.; Prakken, H.; Simari,
  G.~R.; and Toni, F.
\newblock 2014.
\newblock Introduction to structured argumentation.
\newblock {\em Argument and Computation} 5(1):1--4.

\bibitem[\protect\citeauthoryear{Bessette}{1980}]{Bessette1980}
Bessette, J.~M.
\newblock 1980.
\newblock Deliberative democracy: The majority principle in republican
  government.
\newblock In {\em How Democractic is the Constitution?} Washington: American
  Enterprise Institute for Public Policy Research.
\newblock  102--116.

\bibitem[\protect\citeauthoryear{Brier}{1950}]{Brier1950}
Brier, G.~W.
\newblock 1950.
\newblock Verification of forecasts expressed in terms of probability.
\newblock {\em Monthly Weather Review} 78.

\bibitem[\protect\citeauthoryear{Chang \bgroup et al\mbox.\egroup
  }{2016}]{Chang2016}
Chang, W.; Chen, E.; Mellers, B.; and Tetlock, P.~E.
\newblock 2016.
\newblock Developing expert political judgment: The impact of training and
  practice on judgmental accuracy in geopolitical forecasting tournaments.
\newblock {\em Judgment and Decision Making} 11.

\bibitem[\protect\citeauthoryear{Cobo, Mart{\'{\i}}nez, and
  Simari}{2010}]{Cobo2010}
Cobo, M.~L.; Mart{\'{\i}}nez, D.~C.; and Simari, G.~R.
\newblock 2010.
\newblock On admissibility in timed abstract argumentation frameworks.
\newblock In {\em ECAI},  1007--1008.

\bibitem[\protect\citeauthoryear{Cobo, Mart{\'{\i}}nez, and
  Simari}{2011}]{Cobo2012}
Cobo, M.~L.; Mart{\'{\i}}nez, D.~C.; and Simari, G.~R.
\newblock 2011.
\newblock Stable extensions in timed argumentation frameworks.
\newblock In {\em TAFA},  181--196.

\bibitem[\protect\citeauthoryear{Cross \bgroup et al\mbox.\egroup
  }{2018}]{Cross2018}
Cross, D.; Ramos, J.; Mellers, B.; Tetlock, P.~E.; and Scott, D.~W.
\newblock 2018.
\newblock Robust forecast aggregation: Fourier l2e regression.
\newblock {\em Journal of Forecasting} 37.

\bibitem[\protect\citeauthoryear{Cyras \bgroup et al\mbox.\egroup
  }{2019}]{Cyras_19}
Cyras, K.; Letsios, D.; Misener, R.; and Toni, F.
\newblock 2019.
\newblock Argumentation for explainable scheduling.
\newblock In {\em AAAI},  2752--2759.

\bibitem[\protect\citeauthoryear{Cyras \bgroup et al\mbox.\egroup
  }{2021}]{Cyras_21}
Cyras, K.; Rago, A.; Albini, E.; Baroni, P.; and Toni, F.
\newblock 2021.
\newblock Argumentative {XAI:} {A} survey.
\newblock In {\em IJCAI},  4392--4399.

\bibitem[\protect\citeauthoryear{Delbecq, Van~den Ven, and
  Gustafson}{1975}]{Delbecq1986}
Delbecq, A.; Van~den Ven, A.; and Gustafson, D.
\newblock 1975.
\newblock {\em Group Techniques for Program Planning: A Guide to Nominal Group
  and Delphi Processes}.
\newblock Scott Foresman.

\bibitem[\protect\citeauthoryear{Dung and Thang}{2010}]{Dung2010}
Dung, P.~M., and Thang, P.~M.
\newblock 2010.
\newblock Towards (probabilistic) argumentation for jury-based dispute
  resolution.
\newblock In {\em COMMA},  171--182.

\bibitem[\protect\citeauthoryear{Dung}{1995}]{Dung1995}
Dung, P.~M.
\newblock 1995.
\newblock On the acceptability of arguments and its fundamental role in
  nonmonotonic reasoning, logic programming and n-person games.
\newblock {\em Artificial Intelligence} 77.

\bibitem[\protect\citeauthoryear{Evripidou and Toni}{2012}]{Evripidou2012}
Evripidou, V., and Toni, F.
\newblock 2012.
\newblock Argumentation and voting for an intelligent user empowering business
  directory on the web.
\newblock In {\em RR},  209--212.

\bibitem[\protect\citeauthoryear{Fazzinga, Flesca, and
  Furfaro}{2018}]{Fazzinga2018}
Fazzinga, B.; Flesca, S.; and Furfaro, F.
\newblock 2018.
\newblock Probabilistic bipolar abstract argumentation frameworks: complexity
  results.
\newblock In {\em IJCAI},  1803--1809.

\bibitem[\protect\citeauthoryear{{Good Judgment Inc}}{2014}]{GJInc}
{Good Judgment Inc}.
\newblock 2014.
\newblock \url{https://goodjudgment.com/}.
\newblock [Online; accessed 09-October-2021].

\bibitem[\protect\citeauthoryear{Goodwin, G{\"{o}}n{\"{u}}l, and
  {\"{O}}nkal{-}Atay}{2019}]{Goodwin2019}
Goodwin, P.; G{\"{o}}n{\"{u}}l, M.~S.; and {\"{O}}nkal{-}Atay, D.
\newblock 2019.
\newblock When providing optimistic and pessimistic scenarios can be
  detrimental to judgmental demand forecasts and production decisions.
\newblock {\em European Journal of Operational Research} 273(3):992--1004.

\bibitem[\protect\citeauthoryear{Green and Armstrong}{2007}]{Green2007}
Green, K.~C., and Armstrong, J.~S.
\newblock 2007.
\newblock Structured analogies for forecasting.
\newblock {\em International Journal of Forecasting} 23.

\bibitem[\protect\citeauthoryear{Horowtiz \bgroup et al\mbox.\egroup
  }{2021}]{penn_global_2021}
Horowtiz, M.~C.; Ciocca, J.; Kahn, L.; and Ruhl, C.
\newblock 2021.
\newblock Keeping score: A new approach to geopolitical forecasting.
\newblock {\em Penn Global}.

\bibitem[\protect\citeauthoryear{Hunter and Thimm}{2014}]{Hunter2014}
Hunter, A., and Thimm, M.
\newblock 2014.
\newblock Probabilistic argumentation with incomplete information.
\newblock In {\em ECAI},  1033--1034.

\bibitem[\protect\citeauthoryear{Hunter, Polberg, and Thimm}{2020}]{Hunter2020}
Hunter, A.; Polberg, S.; and Thimm, M.
\newblock 2020.
\newblock Epistemic graphs for representing and reasoning with positive and
  negative influences of arguments.
\newblock {\em Artificial Intelligence} 281.

\bibitem[\protect\citeauthoryear{Hunter}{2013}]{Hunter2013}
Hunter, A.
\newblock 2013.
\newblock A probabilistic approach to modelling uncertain logical arguments.
\newblock {\em International Journal of Approximate Reasoning} 54.

\bibitem[\protect\citeauthoryear{Irwin, Rago, and Toni}{2022}]{Irwin2022}
Irwin, B.; Rago, A.; and Toni, F.
\newblock 2022.
\newblock Argumentative forecasting -- extended abstract.
\newblock In {\em AAMAS}.
\newblock (To appear).

\bibitem[\protect\citeauthoryear{Kahneman}{2012}]{Kahneman2012}
Kahneman, D.
\newblock 2012.
\newblock {\em Thinking, fast and slow}.
\newblock Farrar, Straus and Giroux.

\bibitem[\protect\citeauthoryear{Katsagounos \bgroup et al\mbox.\egroup
  }{2021}]{Katsagounos2021}
Katsagounos, I.; Thomakos, D.~D.; Litsiou, K.; and Nikolopoulos, K.
\newblock 2021.
\newblock Superforecasting reality check: Evidence from a small pool of experts
  and expedited identification.
\newblock {\em European Journal of Operational Research} 289(1):107--117.

\bibitem[\protect\citeauthoryear{Kotonya and Toni}{2020}]{Kotonya_20}
Kotonya, N., and Toni, F.
\newblock 2020.
\newblock Explainable automated fact-checking for public health claims.
\newblock In {\em EMNLP},  7740--7754.

\bibitem[\protect\citeauthoryear{Landeta, Barrutia, and
  Lertxundi}{2011}]{Landeta2011}
Landeta, J.; Barrutia, J.; and Lertxundi, A.
\newblock 2011.
\newblock Hybrid delphi: A methodology to facilitate contribution from experts
  in professional contexts.
\newblock {\em Technological Forecasting and Social Change} 78.

\bibitem[\protect\citeauthoryear{Lawrence and Makridakis}{1989}]{Lawrence1989}
Lawrence, M., and Makridakis, S.
\newblock 1989.
\newblock Factors affecting judgmental forecasts and confidence intervals.
\newblock {\em Organizational Behavior and Human Decision Processes} 43.

\bibitem[\protect\citeauthoryear{Lawrence \bgroup et al\mbox.\egroup
  }{2006}]{Lawrence2006}
Lawrence, M.; Goodwin, P.; O'Connor, M.; and Önkal, D.
\newblock 2006.
\newblock Judgmental forecasting: A review of progress over the last 25 years.
\newblock {\em International Journal of Forecasting} 22.

\bibitem[\protect\citeauthoryear{Leite and Martins}{2011}]{Leite2011}
Leite, J., and Martins, J.~G.
\newblock 2011.
\newblock Social abstract argumentation.
\newblock In {\em IJCAI},  2287--2292.

\bibitem[\protect\citeauthoryear{Li, Oren, and Norman}{2011}]{Li2012}
Li, H.; Oren, N.; and Norman, T.~J.
\newblock 2011.
\newblock Probabilistic argumentation frameworks.
\newblock In {\em TAFA},  1--16.

\bibitem[\protect\citeauthoryear{Linstone and Turoff}{1975}]{Linstone1975}
Linstone, H., and Turoff, M.
\newblock 1975.
\newblock {\em The Delphi Method: Techniques and Applications}.
\newblock Addison-Wesley Publishing Company.

\bibitem[\protect\citeauthoryear{Litsiou \bgroup et al\mbox.\egroup
  }{2019}]{Litsiou2019}
Litsiou, K.; Polychronakis, Y.; Karami, A.; and Nikolopoulos, K.
\newblock 2019.
\newblock Relative performance of judgmental methods for forecasting the
  success of megaprojects.
\newblock {\em International Journal of Forecasting}.

\bibitem[\protect\citeauthoryear{MacGregor and
  Armstrong}{1994}]{MacGregorDonaldG1994Jdwd}
MacGregor, D.~G., and Armstrong, J.
\newblock 1994.
\newblock Judgmental decomposition: when does it work?
\newblock {\em International Journal of Forecasting} 10(4):495--506.

\bibitem[\protect\citeauthoryear{Makridakis, Hogarth, and
  Gaba}{2010}]{Makridakis2010}
Makridakis, S.; Hogarth, R.~M.; and Gaba, A.
\newblock 2010.
\newblock Why forecasts fail. what to do instead.
\newblock {\em MIT Sloan Management Review} 51.

\bibitem[\protect\citeauthoryear{McNees}{1987}]{McNees1987}
McNees, S.~K.
\newblock 1987.
\newblock Consensus forecasts: tyranny of the majority?
\newblock {\em New England Economic Review} 15-21.

\bibitem[\protect\citeauthoryear{Mellers \bgroup et al\mbox.\egroup
  }{2015a}]{Mellers20151}
Mellers, B.; Stone, E.; Atanasov, P.; Rohrbaugh, N.; Metz, S.~E.; Ungar, L.;
  Bishop, M.~M.; Horowitz, M.; Merkle, E.; and Tetlock, P.~E.
\newblock 2015a.
\newblock The psychology of intelligence analysis: Drivers of prediction
  accuracy in world politics.
\newblock {\em Journal of Experimental Psychology: Applied} 21.

\bibitem[\protect\citeauthoryear{Mellers \bgroup et al\mbox.\egroup
  }{2015b}]{Mellers2015}
Mellers, B.; Stone, E.; Murray, T.; Minster, A.; Rohrbaugh, N.; Bishop, M.;
  Chen, E.; Baker, J.; Hou, Y.; Horowitz, M.; Ungar, L.; and Tetlock, P.~E.
\newblock 2015b.
\newblock Identifying and cultivating superforecasters as a method of improving
  probabilistic predictions.
\newblock {\em Perspectives on Psychological Science} 10.

\bibitem[\protect\citeauthoryear{Nikolopoulos \bgroup et al\mbox.\egroup
  }{2015}]{Nikolopoulos2015}
Nikolopoulos, K.; Litsa, A.; Petropoulos, F.; Bougioukos, V.; and Khammash, M.
\newblock 2015.
\newblock Relative performance of methods for forecasting special events.
\newblock {\em Journal of Business Research} 68.

\bibitem[\protect\citeauthoryear{Nikolopoulos}{2021}]{Nikolopoulos2021}
Nikolopoulos, K.
\newblock 2021.
\newblock We need to talk about intermittent demand forecasting.
\newblock {\em European Journal of Operational Research} 291(2):549--559.

\bibitem[\protect\citeauthoryear{Page, Aiken, and Murdick}{2020}]{georgetown}
Page, M.; Aiken, C.; and Murdick, D.
\newblock 2020.
\newblock How crowd forecasting can inform the big picture.
\newblock {\em Georgetown CSET}.

\bibitem[\protect\citeauthoryear{Petropoulos, Fildes, and
  Goodwin}{2016}]{Petropoulos2016}
Petropoulos, F.; Fildes, R.; and Goodwin, P.
\newblock 2016.
\newblock Do 'big losses' in judgmental adjustments to statistical forecasts
  affect experts' behaviour?
\newblock {\em European Journal of Operational Research} 249(3):842--852.

\bibitem[\protect\citeauthoryear{Rago and Toni}{2017}]{Rago2017}
Rago, A., and Toni, F.
\newblock 2017.
\newblock Quantitative argumentation debates with votes for opinion polling.
\newblock In {\em PRIMA},  369--385.

\bibitem[\protect\citeauthoryear{Rago \bgroup et al\mbox.\egroup
  }{2016}]{Rago2016}
Rago, A.; Toni, F.; Aurisicchio, M.; and Baroni, P.
\newblock 2016.
\newblock Discontinuity-free decision support with quantitative argumentation
  debates.
\newblock In {\em KR},  63--73.

\bibitem[\protect\citeauthoryear{{\v R}eh{\r u}{\v r}ek and
  Sojka}{2010}]{rehurek2011gensim}
{\v R}eh{\r u}{\v r}ek, R., and Sojka, P.
\newblock 2010.
\newblock {Software Framework for Topic Modelling with Large Corpora}.
\newblock In {\em {LREC}},  45--50.

\bibitem[\protect\citeauthoryear{Roesch}{2015}]{roesch_2015}
Roesch, B.
\newblock 2015.
\newblock Measuring accuracy in prediction markets and opinion.
\newblock
  \href{https://www.cultivatelabs.com/posts/measuring-accuracy-in-prediction-markets-and-opinion-poll-pools}{https://www.cultivatelabs.com/posts/
  measuring-accuracy-in-prediction-markets-and-opinion-poll-pools}.
\newblock [Online; accessed 05-February-2022].

\bibitem[\protect\citeauthoryear{Tetlock and Gardner}{2016}]{TetlockGard2016}
Tetlock, P.~E., and Gardner, D.
\newblock 2016.
\newblock {\em Superforecasting: the art and science of prediction}.
\newblock Crown Publishers.

\bibitem[\protect\citeauthoryear{Tetlock \bgroup et al\mbox.\egroup
  }{2014}]{Tetlock2014art}
Tetlock, P.~E.; Mellers, B.~A.; Rohrbaugh, N.; and Chen, E.
\newblock 2014.
\newblock Forecasting tournaments: Tools for increasing transparency and
  improving the quality of debate.
\newblock {\em Current Directions in Psychological Science} 23.

\bibitem[\protect\citeauthoryear{Tetlock}{2017}]{TetlockExp2017}
Tetlock, P.~E.
\newblock 2017.
\newblock {\em Expert political judgment: How good is it? How can we know?}
\newblock Princeton University Press.

\bibitem[\protect\citeauthoryear{Thimm}{2012}]{Thimm2012}
Thimm, M.
\newblock 2012.
\newblock A probabilistic semantics for abstract argumentation.
\newblock In {\em ECAI},  750--755.

\bibitem[\protect\citeauthoryear{Zellner \bgroup et al\mbox.\egroup
  }{2021}]{Zellner2021}
Zellner, M.; Abbas, A.~E.; Budescu, D.~V.; and Galstyan, A.
\newblock 2021.
\newblock A survey of human judgement and quantitative forecasting methods.
\newblock {\em Royal Society open science} 8(2):201187.

\end{thebibliography}

\end{document}